\newcommand{\M}{\mathcal{M}}  
\newcommand{\R}{\mathbb{R}}
\newcommand{\D}{d}  
\newcommand{\Domain}{\mathcal{D}}
\newcommand{\Range}{\mathcal{R}}
\newcommand{\aux}{\textsf{aux}}
\newcommand{\eqdef}{\stackrel{\Delta}{=}}
\newcommand{\E}{\mathbb{E}}
\newcommand{\leader}[1]{\medskip\noindent\textbf{#1}}
\newcommand{\eqr}[1]{Eq.~\eqref{#1}}
\newcommand{\btheta}{{\mathbb \theta}}
\newcommand{\calL}{\ensuremath{\mathcal L}}
\newcommand{\calN}{\ensuremath{\mathcal N}}
\newcommand{\g}{\ensuremath{\mathbf g}}
\newcommand{\Id}{\ensuremath{\mathbf I}}
\newcommand{\eps}{\varepsilon}
\newcommand{\full}[2]{#1}
\newtheorem{thm}{Theorem}
\newtheorem{lem}[thm]{Lemma}
\newcommand{\mynote}[3]{[{\color{#2}{\textbf{#1} \emph{#3}}}]}
\newcommand{\ktnote}[1]{\mynote{KT}{red}{#1}}
\def\logmgfa{moments\xspace}
\def\Logmgfa{Moments\xspace}
\newcommand{\mycomment}[1]{}
\begin{document}
\full{
\AddToShipoutPicture*{\AtPageUpperLeft{\small
		{\raisebox{-1cm}{\hspace{1in}\parbox{\textwidth}{A preliminary version of this paper appears in the proceedings of the \emph{23rd ACM Conference on Computer and Communications Security (CCS 2016)}. This is a full version.}}}}}}{}
	
\numberofauthors{6} 
\author{
	\alignauthor
	Mart\'in Abadi\thanks{Google.}
	\alignauthor
	 Andy Chu\footnotemark[1]
	 \alignauthor
	 Ian Goodfellow\thanks{OpenAI. Work done while at Google.}
	 \and
     \alignauthor	  
	 H.~Brendan McMahan\footnotemark[1]
	 \alignauthor
	 Ilya Mironov\footnotemark[1]
	 \alignauthor
	 Kunal Talwar\footnotemark[1]
	 \and
	 \alignauthor
	 Li Zhang\footnotemark[1]}
	
\additionalauthors{}

\CopyrightYear{2016} 
\setcopyright{rightsretained} 
\conferenceinfo{CCS'16}{October 24-28, 2016, Vienna, Austria} 
\isbn{978-1-4503-4139-4/16/10}
\doi{http://dx.doi.org/10.1145/2976749.2978318}

\clubpenalty=10000 
\widowpenalty = 10000

\title{Deep Learning with Differential Privacy\full{\\ \medskip \fontsize{14}{14}\textsf{\today}}{}}

\maketitle

\abstract{Machine learning techniques based on neural networks are achieving remarkable results in a wide variety of domains. Often, the training of models requires large,
representative datasets, which may be crowdsourced and contain
sensitive information. The models should not expose private
information in these datasets.  Addressing this goal, we develop new
algorithmic techniques for learning and a refined analysis of privacy
costs within the framework of differential privacy. Our implementation
and experiments demonstrate that we can train deep neural networks
with non-convex objectives, under a modest privacy budget, and at a
manageable cost in software complexity, training efficiency, and model
quality.}

\section{Introduction} 

Recent progress in neural networks has led to impressive successes in
a wide range of applications, including image classification, language
representation, move selection for Go, and many more
(e.g.,~\cite{Szegedy-et-al-CVPR2015,he2015delving,Vinyals-et-al-Grammar,maddison2015go,GoNature16}). These advances are enabled, in part, by the
availability of large and representative datasets for training 
neural networks. These datasets are often crowdsourced, and may
contain sensitive information.  Their use requires techniques that
meet the demands of the applications while offering principled and rigorous privacy guarantees.

In this paper, we combine
state-of-the-art machine learning methods with advanced
privacy-preserving mechanisms, training neural networks within a
modest (``single-digit'') privacy budget.  
We treat models with non-convex objectives, several layers, and tens of thousands to millions of parameters. (In contrast,
previous work obtains strong results on convex models with smaller numbers
of parameters, or treats complex neural networks but with a large
privacy loss.)
For this purpose, we
develop new algorithmic techniques, a refined analysis of privacy
costs within the framework of differential privacy, and careful
implementation strategies:
\begin{enumerate}

\item We demonstrate that, by tracking detailed information (higher
  moments) of the privacy loss, we can obtain much tighter estimates on
  the overall privacy loss, both asymptotically and empirically.

\item We improve the computational efficiency of differentially
  private training by introducing new techniques. These techniques
  include efficient algorithms for computing gradients for individual
  training examples, subdividing tasks into smaller batches to reduce memory footprint, and
  applying differentially private principal projection at the input
  layer.


\item We build on the machine learning framework
  TensorFlow~\cite{tensorflow2015-whitepaper} for training models with differential privacy.  
We evaluate our approach on two standard image classification
tasks, MNIST and CIFAR-10. We chose these two tasks because they are based on public data\-sets and have a long record of serving as benchmarks in machine learning.
Our experience indicates that privacy protection for deep neural networks can be
  achieved at a modest cost in software complexity, training
  efficiency, and model quality.
\end{enumerate}

Machine learning systems often comprise elements that contribute to
protecting their training data. In particular, regularization
techniques, which aim to avoid overfitting to the examples used for
training, may hide details of those examples. On the other hand,
explaining the internal representations in deep neural networks is
notoriously difficult, and their large capacity entails that these
representations may potentially encode fine details of at least some
of the training data. In some cases, a determined adversary may be
able to extract parts of the training data. For example, Fredrikson et
al.~demonstrated a model-inversion attack that recovers images
from a facial recognition system~\cite{FJR15}. 

While the model-inversion attack requires only ``black-box'' access to
a trained model (that is, interaction with the model via inputs and
outputs), we consider adversaries with additional capabilities, much
like Shokri and Shmatikov~\cite{ShokriShmatikov15}.  
Our approach offers protection against a strong adversary with full knowledge of the
training mechanism and access to the model's parameters.  
This protection is attractive, in particular, for applications of
machine learning on mobile phones, tablets, and other devices. Storing
models on-device enables power-efficient, low-latency inference, and
may contribute to privacy since inference does not require
communicating user data to a central server; on the other hand, we
must assume that the model parameters themselves may be exposed to
hostile inspection.
Furthermore, when we are concerned with preserving the privacy
of one record in the training data, we allow for the possibility
that the adversary controls some or even all of the rest of the training data.
In practice, this possibility cannot always be excluded, for example 
when the data is crowdsourced.

The next section reviews background on deep learning and on differential privacy.
Sections~\ref{sec:approach} and \ref{sec:impl} explain our approach and implementation. Section~\ref{sec:results} describes our experimental results.
Section~\ref{sec:related} discusses related work, and Section~\ref{sec:conclusions} concludes.
Deferred proofs appear in the \full{Appendix}{full version of the paper~\protect{\cite{DL-DP-arxiv}}}.

\section{Background} 
In this section we briefly recall the definition of differential privacy, introduce the Gaussian mechanism and composition theorems, and overview basic principles of deep learning.

\subsection{Differential Privacy}

Differential privacy~\cite{DMNS06,Dwork-CACM,DworkRoth14} constitutes a
strong standard for privacy guarantees for algorithms on aggregate
data\-bases. It is defined in terms of the
application-specific concept of adjacent databases. In our
experiments, for instance, each training dataset is a set of
image-label pairs; we say that two of these sets are adjacent if they
differ in a single entry, that is, if one image-label pair is
present in one set and absent in the other.

\begin{definition}A randomized mechanism $\M\colon \Domain\rightarrow\Range$ with domain $\Domain$ and range $\mathcal{R}$ 
	satisfies $(\eps,\delta)$-differential privacy if for any two adjacent inputs $\D,\D'\in \Domain$ and for any subset of outputs $S\subseteq\Range$ it holds that
	\[
	\Pr[\M(\D)\in S]\leq e^{\eps}\Pr[\M(\D')\in S]+\delta.
	\]
\end{definition}
The original definition of $\eps$-differential privacy does not 
include the additive term $\delta$.  We use the variant
introduced by Dwork et al.~\cite{ODO}, which allows for the
possibility that plain $\eps$-differential privacy is broken with
 probability~$\delta$ (which is preferably smaller than $1/|d|$).

Differential privacy has several properties that
make it particularly useful in applications such as ours:
composability, group privacy, and robustness to auxiliary
information. Composability enables modular design of mechanisms: if
all the components of a mechanism are differentially private, then so
is their composition. Group privacy implies graceful degradation of
privacy guarantees if datasets contain correlated inputs, such as the
ones contributed by the same individual. Robustness to auxiliary
information means that privacy guarantees are not affected by any side
information available to the adversary.

A common paradigm for approximating a deterministic real-valued
function $f\colon \Domain\rightarrow\mathbb{R}$ with a
differentially private mechanism is via additive noise calibrated to
$f$'s \emph{sensitivity} $S_f$, which is defined as the maximum of the 
absolute distance 
$|f(\D)-f(\D')|$ where $\D$ and $\D'$ are adjacent inputs.
(The restriction to a real-valued function is intended to simplify this
review, but is not essential.)
For instance, the Gaussian noise mechanism is defined by
\[
\M(\D)\eqdef f(\D)+\calN(0, S_f^2\cdot \sigma^2),
\]
where $\calN(0, S_f^2\cdot \sigma^2)$ is the normal (Gaussian) distribution with mean 0 and standard deviation $S_f \sigma$.
A single application of the Gaussian mechanism to function $f$ of
sensitivity $S_f$ satisfies $(\eps, \delta)$-differential privacy if
$\delta\geq \frac45 \exp(-(\sigma\eps)^2/2)$ and $\eps<1$~\cite[Theorem
3.22]{DworkRoth14}. Note that this analysis of the mechanism can be
applied \emph{post hoc}, and, in particular, that there are infinitely
many $(\eps,\delta)$ pairs that satisfy this condition.

Differential privacy for repeated applications of additive-noise 
mechanisms follows from the basic composition
theorem~\cite{ODO,DworkLei09}, or from advanced composition theorems
and their
refinements~\cite{DRV10-boosting,KOV15,DworkRothR16,BunS16}. The task
of keeping track of the accumulated privacy loss in the course of
execution of a composite mechanism, and enforcing the applicable
privacy policy, can be performed by the \emph{privacy accountant},
introduced by McSherry~\cite{PINQ}.

The basic blueprint for designing a differentially private
additive-noise mechanism that implements a given functionality
consists of the following steps: approximating the functionality by a
sequential composition of bounded-sensitivity functions; choosing
parameters of additive noise; and performing privacy analysis of the
resulting mechanism. We follow this approach in
Section~\ref{sec:approach}.

\subsection{Deep Learning} 
Deep neural networks, which are remarkably effective for many machine
learning tasks, define parameterized functions from inputs
to outputs as compositions of many layers of basic building blocks, such as affine transformations and simple nonlinear functions. Commonly used examples of the latter are sigmoids and rectified linear units (ReLUs). By varying parameters of these blocks, we can ``train'' such a parameterized function with the goal of fitting any given finite set of input/output examples.

More precisely, we define a loss
function $\calL$ that represents the penalty for mismatching the
training data.  The loss $\calL(\btheta)$ on parameters $\btheta$ is
the average of the loss over the training examples $\{x_1, \ldots,
x_N\}$, so $\calL(\btheta) = \frac{1}{N}\sum_i \calL(\btheta, x_i)$.
Training consists in finding $\btheta$ that yields an acceptably small
loss, hopefully the smallest loss (though in practice we seldom expect to reach
an exact global minimum).

For complex networks, the loss function $\calL$ is usually non-convex
and difficult to minimize. In practice, the minimization is often done
by the mini-batch stochastic gradient descent (SGD) algorithm. In this
algorithm, at each step, one forms a batch $B$ of random examples and
computes $\g_B = 1/|B| \sum_{x\in B} \nabla_\btheta\calL(\btheta, x)$
as an estimation to the gradient $\nabla_\btheta\calL(\btheta)$. Then
$\btheta$ is updated following the gradient direction $-\g_B$ towards
a local minimum.

Several systems have been built to support the definition of neural
networks, to enable efficient training, and then to perform efficient
inference (execution for fixed parameters)~\cite{lua,torch7,tensorflow2015-whitepaper}.  We base our work on
TensorFlow, an open-source dataflow engine released by
Google~\cite{tensorflow2015-whitepaper}.  TensorFlow allows the programmer to define
large computation graphs from basic operators, and to distribute their
execution across a heterogeneous distributed system.  TensorFlow
automates the creation of the computation graphs for gradients; it
also makes it easy to batch computation.





\section{Our approach}\label{sec:approach} 

This section describes the main components of our approach toward differentially
private training of neural networks: a differentially private stochastic gradient descent (SGD) algorithm, the moments accountant, and hyperparameter tuning.

\subsection{Differentially Private SGD Algorithm}

One might attempt to protect the privacy of training data by
working only on the final parameters that result from the training
process, treating this process as a black box.  Unfortunately, in
general, one may not have a useful, tight characterization of the
dependence of these parameters on the training data; adding overly conservative
noise to the parameters, where the noise is selected according to the worst-case analysis, would destroy the utility of the learned model.
Therefore, we prefer a more sophisticated approach in which
we aim to control the influence of the training data during the
training process, specifically in the SGD computation.
This approach has been followed 
in previous works (e.g.,~\cite{SongCS13,BassilyTS14}); we make several
modifications and extensions, in particular in our privacy accounting.

Algorithm~\ref{alg:privsgd} outlines our basic
method for training a model with parameters
$\btheta$ by minimizing the empirical loss function
$\calL(\btheta)$. At each step of the SGD, we compute the gradient
$\nabla_\btheta\calL(\btheta, x_i)$ for a random subset of examples,
clip the $\ell_2$ norm of each gradient, compute the average, add
noise in order to protect privacy, and take a step in the opposite direction of this
average noisy gradient. At the end, in addition to outputting the model, we will also need to compute the privacy loss of the mechanism based on the information maintained by the privacy accountant. Next we describe in more detail each component of this algorithm and our refinements.

\begin{algorithm}[htb]
	\caption{Differentially private SGD (Outline)}\label{alg:privsgd}
	\begin{algorithmic}
	\REQUIRE Examples $\{x_1,\ldots,x_N\}$, loss function $\calL(\btheta)=\frac{1}{N}\sum_i \calL(\btheta, x_i)$. Parameters: learning rate $\eta_t$, noise scale $\sigma$, group size $L$, gradient norm bound $C$. 
		\STATE {\bf Initialize} $\btheta_0$ randomly
		\FOR{$t \in [T]$}
		\STATE {Take a random sample $L_t$ with sampling probability $L/N$}
		\STATE {\bf Compute gradient}
		\STATE {For each $i\in L_t$, compute $\g_t(x_i) \gets \nabla_{\btheta_t} \calL(\btheta_t, x_i)$}		
		\STATE {\bf Clip gradient}
		\STATE {$\bar{\g}_t(x_i) \gets \g_t(x_i) / \max\big(1, \frac{\|\g_t(x_i)\|_2}{C}\big)$}
		\STATE {\bf Add noise}
		\STATE {$\tilde{\g}_t \gets \frac{1}{L}\left( \sum_i \bar{\g}_t(x_i) + \mathcal{N}(0, \sigma^2 C^2 \Id)\right)$}
		\STATE {\bf Descent}
		\STATE { $\btheta_{t+1} \gets \btheta_{t} - \eta_t \tilde{\g}_t$}
		\ENDFOR
		\STATE {\bf Output} $\btheta_T$ and compute the overall privacy cost $(\eps, \delta)$ using a privacy accounting method.
	\end{algorithmic}
\end{algorithm}

\leader{Norm clipping:} Proving the differential privacy guarantee of
Algorithm~\ref{alg:privsgd} requires bounding the influence of each
individual example on $\tilde{\g}_t$. Since there is no {\em a priori}
bound on the size of the gradients, we {\em clip} each gradient in
$\ell_2$ norm; i.e., the gradient vector $\g$ is replaced by $\g/
\max\big(1, \frac{\|\g\|_2}{C}\big)$, for a clipping threshold $C$. This clipping
ensures that if $\|\g\|_2 \leq C$, then $\g$ is preserved, whereas if
$\|\g\|_2 > C$, it gets scaled down to be of norm $C$. We remark that
gradient clipping of this form is a popular ingredient of SGD for deep
networks for non-privacy reasons, though in that setting it usually
suffices to clip after averaging.

\leader{Per-layer and time-dependent parameters:} The pseudocode for Algorithm~\ref{alg:privsgd}
groups all the parameters into a single input $\btheta$ of the loss function $\calL(\cdot)$.  For multi-layer neural networks, we consider each
layer separately, which allows setting different clipping thresholds $C$ and noise scales $\sigma$ for different layers. Additionally, the clipping and noise parameters may vary with the number
of training steps $t$. In results presented in Section~\ref{sec:results} we use constant settings for $C$ and $\sigma$.

\leader{Lots:} Like the ordinary SGD algorithm, Algorithm~\ref{alg:privsgd}
estimates the gradient of $\calL$ by computing the gradient of the
loss on a group of examples and taking the average. This average provides an
unbiased estimator, the variance of which decreases quickly with the
size of the group. We call such a group a {\em lot}, to distinguish it
from the computational grouping that is commonly called a {\em batch}. 
In order to limit memory consumption, we may set the batch size much
smaller than the lot size $L$, which is a parameter of the algorithm.
We perform the computation in batches, then group several batches into a lot for adding noise. 
In practice, for efficiency, the construction of batches and lots is done by randomly permuting the
examples and then partitioning them into groups of the appropriate sizes.
For ease of analysis, however, we assume that each lot is formed by independently picking each example with probability $q=L/N$, where $N$ is the size of the input dataset.

As is common in the literature, we normalize the running time of a training algorithm by
expressing it as the number of \emph{epochs}, where each epoch is the (expected) 
number of batches required to process $N$ examples. In our notation, an epoch
consists of $N/L$ lots.

\leader{Privacy accounting:} For differentially private SGD, an important
issue is computing the overall privacy cost of the training. The composability
of differential privacy allows us to implement an ``accountant'' procedure that
computes the privacy cost at each access to the training data,
and accumulates this cost as the training progresses.  Each step of training typically
requires gradients at multiple layers, and the accountant accumulates the cost that corresponds to
all of them.

\leader{\Logmgfa accountant:} Much research has been devoted to
studying the privacy loss for a particular noise distribution as well
as the composition of privacy losses.  For the Gaussian noise that we use, if
we choose $\sigma$ in Algorithm~\ref{alg:privsgd} to be
$\sqrt{2\log\frac{1.25}{\delta}}/\eps$, then by standard
arguments~\cite{DworkRoth14} each step is $(\eps,\delta)$-differentially private with
respect to the lot. Since the lot itself is a random sample from the
database, the privacy amplification theorem~\cite{KasiviswanathanLNRS11,BeimelBKN14} implies
that each step is $(O(q\eps), q\delta)$-differentially private with
respect to the full database where $q=L/N$ is the sampling ratio per
lot and $\eps \leq 1$. The result in the literature that yields the best overall bound
is the strong composition theorem~\cite{DRV10-boosting}.


However, the strong composition theorem can be loose, and does not take into account the
particular noise distribution under consideration.
In our work, we invent a stronger accounting method, which
we call the \logmgfa accountant. It allows us to prove that
Algorithm~\ref{alg:privsgd} is $(O(q\eps\sqrt{T}), \delta)$-differentially private for appropriately chosen settings of the noise scale and the clipping threshold.
Compared to what one would obtain by the strong composition theorem, our bound is tighter in two
ways: it saves a $\sqrt{\log(1/\delta)}$ factor in the $\eps$
part and a $Tq$ factor in the $\delta$ part. Since we expect $\delta$ to be small and $T \gg 1/q$ (i.e., each example is examined multiple times), the saving provided by our bound is quite significant. This result is one of our main contributions.
\begin{thm}\label{thm:main}
There exist constants $c_1$ and $c_2$ so that given the sampling probability $q=L/N$ and the number of steps $T$, for any $\eps < c_1 q^2T$,  Algorithm~\ref{alg:privsgd} is $(\eps,\delta)$-differentially private for any $\delta>0$ if we choose 
\begin{align*}
\sigma \geq c_2\frac{q \sqrt{T \log(1/\delta)}}{\eps}\,.
\end{align*}
\end{thm}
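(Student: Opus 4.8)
The plan is to introduce a notion that tracks not just a single $(\eps,\delta)$ pair but the whole "privacy loss random variable" of the mechanism, via its log-moment generating function, and then exploit the fact that this quantity composes additively. Concretely, for a mechanism $\M$ and adjacent inputs $\D,\D'$, define the privacy loss at an outcome $o$ as $c(o;\M,\aux,\D,\D') = \log\frac{\Pr[\M(\aux,\D)=o]}{\Pr[\M(\aux,\D')=o]}$, and define $\alpha_\M(\lambda) \eqdef \max_{\aux,\D,\D'} \log \E_{o\sim\M(\aux,\D)}[\exp(\lambda\, c(o))]$. The first step is to record two structural facts: (i) \emph{composability} — if $\M$ is the adaptive composition of $\M_1,\dots,\M_k$ then $\alpha_\M(\lambda)\le\sum_i\alpha_{\M_i}(\lambda)$; and (ii) \emph{tail bound / conversion} — for any $\eps>0$, $\M$ is $(\eps,\delta)$-differentially private with $\delta=\min_\lambda\exp(\alpha_\M(\lambda)-\lambda\eps)$. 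Both follow from standard moment/Markov arguments: (i) because the privacy loss of a composition is the sum of the per-step losses and the moment generating function of a sum factors under the adaptive conditioning, and (ii) by applying Markov's inequality to $\exp(\lambda\, c(o))$ and absorbing the bad event into $\delta$.

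The heart of the argument is then to bound $\alpha(\lambda)$ for a single step of Algorithm~\ref{alg:privsgd}: the Gaussian mechanism with noise scale $\sigma$ applied to a sum of clipped gradients, \emph{composed with Poisson subsampling at rate $q=L/N$}. After the clipping normalization and scaling by $1/L$, a single step is (a scalar reduction of) the mechanism that outputs $\calN(0,\sigma^2)$ on input $\D'$ and a mixture $(1-q)\,\calN(0,\sigma^2) + q\,\calN(1,\sigma^2)$ on the adjacent input $\D$ — this reduction to one dimension is where the $\ell_2$ clipping bound $C$ is used, and it is a short convexity/rotation argument. I would then prove the key lemma: there is a constant such that for $\sigma\ge 1$ and $\lambda \le \sigma^2\log\frac{1}{q\sigma}$ (roughly), the subsampled Gaussian satisfies $\alpha(\lambda) \le \frac{q^2\lambda(\lambda+1)}{(1-q)\sigma^2} + O(q^3\lambda^3/\sigma^3)$. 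The proof of this lemma is the main obstacle and the technically delicate part: one writes $\E[\exp(\lambda c)]$ as an integral against the mixture density, expands $\log\bigl(1 + q(e^{z/\sigma^2 - 1/2\sigma^2}-1)\bigr)$ in the exponent, and carefully bounds the resulting series — the linear term vanishes by symmetry of the Gaussian, the quadratic term gives the leading $q^2\lambda^2/\sigma^2$ contribution, and the higher-order terms must be controlled using the range restriction on $\lambda$ together with Gaussian tail estimates. Getting the constants and the range of validity right, while keeping the error terms genuinely lower order, is the crux.

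With the single-step bound in hand, composability gives, over $T$ steps, $\alpha_\M(\lambda) \le \frac{T q^2 \lambda^2}{\sigma^2}$ (up to constants, in the valid range of $\lambda$), and the conversion lemma says it suffices to choose $\lambda$ and $\sigma$ so that (a) $\frac{Tq^2\lambda^2}{\sigma^2} \le \lambda\eps/2$ and (b) $\exp(-\lambda\eps/2)\le\delta$. Condition (b) forces $\lambda \ge \frac{2\log(1/\delta)}{\eps}$; plugging the smallest such $\lambda$ into (a) and solving for $\sigma$ yields $\sigma \ge c_2\,\frac{q\sqrt{T\log(1/\delta)}}{\eps}$, which is exactly the claimed bound. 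Finally I must check that this choice of $\lambda$ lies within the range where the single-step lemma is valid — this is where the hypothesis $\eps < c_1 q^2 T$ enters: it guarantees $\lambda = \Theta(\log(1/\delta)/\eps)$ is small enough relative to $\sigma^2/\log(q\sigma)^{-1}$ once $\sigma$ is set as above, closing the argument. I would relegate the detailed constant-chasing in the single-step lemma and the range verification to the appendix, stating them as separate lemmas.
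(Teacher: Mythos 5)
Your proposal follows essentially the same route as the paper: the moments accountant with its composability and tail-bound properties (Theorem~\ref{thm:property}), a single-step moments bound for the subsampled Gaussian of the form $\alpha(\lambda)\le q^2\lambda(\lambda+1)/(1-q)\sigma^2+O(q^3\lambda^3/\sigma^3)$ valid for $\lambda\le\sigma^2\log(1/q\sigma)$ (Lemma~\ref{lem:sampled_gaussian_mgf}), and the final optimization over $\lambda$ in which the hypothesis $\eps<c_1q^2T$ ensures the chosen $\lambda$ stays in the lemma's range. The only cosmetic difference is that you expand the logarithm of the likelihood ratio inside the exponent, whereas the paper performs a binomial expansion of $(\nu_0/\nu_1)^{\lambda+1}$; both yield the same structure of vanishing first-order term, dominant quadratic term, and geometrically decaying higher-order terms.
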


If we use the strong composition theorem, we will then need to choose
$\sigma=\Omega(q\sqrt{T\log(1/\delta)\log(T/\delta)}/\eps)$. Note that
we save a factor of $\sqrt{\log(T/\delta)}$ in our asymptotic
bound. The \logmgfa accountant is beneficial in theory, as this result
indicates, and also in practice, as can be seen from
Figure~\ref{fig:epoch-eps} in Section~\ref{sec:impl}.  For example,
with $L=0.01N$, $\sigma=4$, $\delta=10^{-5}$, and $T=10000$, we have
$\eps\approx 1.26$ using the \logmgfa accountant. As a comparison, we
would get a much larger $\eps\approx 9.34$ using the strong
composition theorem.


\newcommand{\outcome}{o}
\subsection{The \Logmgfa Accountant: Details}

The \logmgfa accountant keeps track of a bound on the moments of the
privacy loss random variable (defined below in \eqr{eq:privacyloss}).
It generalizes the standard approach of tracking $(\eps,\delta)$ and
using the strong composition theorem. While such an improvement was
known previously for composing Gaussian mechanisms, we show that it
applies also for composing Gaussian mechanisms with random sampling
and can provide much tighter estimate of the privacy loss of
Algorithm~\ref{alg:privsgd}.

Privacy loss is a random variable dependent on the random noise added
to the algorithm.  That a mechanism $\cal{M}$ is
$(\eps,\delta)$-differentially private is equivalent to a certain tail
bound on $\cal{M}$'s privacy loss random variable.  While the tail
bound is very useful information on a distribution, composing directly
from it can result in quite loose bounds.  We instead compute the log
moments of the privacy loss random variable, which compose
linearly. We then use the moments bound, together with the
standard Markov inequality, to obtain the tail bound, that is the
privacy loss in the sense of differential privacy.

More specifically, for neighboring databases $d, d' \in \Domain^n$, a
mechanism $\M$, auxiliary input \textsf{aux}, and an outcome $\outcome \in \Range$, define the
privacy loss at $\outcome$ as
\begin{equation}\label{eq:privacyloss}
  c(\outcome; \M,  \textsf{aux}, d, d') \eqdef \log \frac{\Pr[\M( \textsf{aux}, d) = \outcome]}{\Pr[\M( \textsf{aux}, d') = \outcome]}.
\end{equation}

A common design pattern, which we use extensively in the paper, is to update the state by sequentially applying differentially private mechanisms. This is an instance of \emph{adaptive composition}, which we model by letting the auxiliary input of the $k^\textrm{th}$ mechanism $\M_k$ be the output of all the previous mechanisms.

For a given mechanism $\M$, we define the $\lambda^\textrm{th}$ moment $\alpha_\M(\lambda;\aux, d, d')$ as the log of the \emph{moment generating function} evaluated at the value~$\lambda$:
\begin{multline}\label{eq:logmgf}
\alpha_\M(\lambda;\aux,d,d') \eqdef \\
\log \E_{\outcome \sim \M(\textsf{aux},d)}[\exp(\lambda c(\outcome; \M, \textsf{aux}, d, d'))].
\end{multline}

In order to prove privacy guarantees of a mechanism, it is useful to bound all possible $\alpha_\M(\lambda; \aux, d, d')$. We define
\[\alpha_\M(\lambda) \eqdef \max_{\aux, d, d'} \alpha_\M(\lambda; \aux, d, d')\,,\]
where the maximum is taken over all possible $\aux$ and all the neighboring databases $d,d'$.

We state the properties of $\alpha$ that we use for the
\logmgfa accountant. 
\begin{thm}\label{thm:property}
Let $\alpha_\M(\lambda)$ defined as above. Then

\begin{enumerate}
\item\label{thm:partone} \textbf{[Composability]}
Suppose that a mechanism $\M$ consists of a sequence of adaptive mechanisms $\M_1, \ldots, \M_k$ where $\M_i\colon \prod_{j=1}^{i-1}\Range_j\times \Domain \to\Range_i$. Then, for any $\lambda$
\[\alpha_\M(\lambda) \leq \sum_{i=1}^k \alpha_{\M_i}(\lambda)\,.\]

\item\label{thm:parttwo} \textbf{[Tail bound]}
For any $\eps>0$, the mechanism $\M$ is $(\eps, \delta)$-differentially private for
\[\delta=\min_{\lambda} \exp(\alpha_\M(\lambda) -\lambda \eps)\,.\]
\end{enumerate}
\end{thm}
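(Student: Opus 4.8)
The plan is to prove the two parts separately, each by a short direct computation from the definitions in \eqr{eq:privacyloss} and \eqr{eq:logmgf}. For part~\ref{thm:partone}, the guiding fact is that \emph{privacy loss is additive under adaptive composition}. Write an output of $\M$ as a tuple $o=(o_1,\dots,o_k)$ with $o_i\in\Range_i$, where $o_i$ is the output of $\M_i$ run on the database together with auxiliary input $(\aux,o_1,\dots,o_{i-1})$. By the chain rule for conditional probabilities (or densities), $\Pr[\M(\aux,d)=o]$ factors as $\prod_{i=1}^{k}\Pr[\M_i((o_1,\dots,o_{i-1}),\aux,d)=o_i]$, and likewise with $d'$ in place of $d$; dividing the two products and taking logarithms gives
\[
  c(o;\M,\aux,d,d') \;=\; \sum_{i=1}^{k} c_i\,, \qquad\text{with } c_i \eqdef c\big(o_i;\,\M_i,\,((o_1,\dots,o_{i-1}),\aux),\,d,\,d'\big)\,.
\]
To bound $\E_{o\sim\M(\aux,d)}[\exp(\lambda\,c(o;\M,\aux,d,d'))]=\E_{o\sim\M(\aux,d)}[\exp(\lambda\sum_i c_i)]$ I would peel off the last coordinate: conditioning on $o_1,\dots,o_{k-1}$, the conditional expectation of $\exp(\lambda c_k)$ over $o_k\sim\M_k((o_1,\dots,o_{k-1}),\aux,d)$ equals, by \eqr{eq:logmgf}, $\exp\big(\alpha_{\M_k}(\lambda;((o_1,\dots,o_{k-1}),\aux),d,d')\big)\leq\exp(\alpha_{\M_k}(\lambda))$, a constant that factors out. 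Iterating this (an induction on $k$) gives $\E_{o\sim\M(\aux,d)}[\exp(\lambda\,c(o;\M,\aux,d,d'))]\leq\prod_{i=1}^{k}\exp(\alpha_{\M_i}(\lambda))$; taking logarithms yields $\alpha_\M(\lambda;\aux,d,d')\leq\sum_{i=1}^{k}\alpha_{\M_i}(\lambda)$, and since the right-hand side does not depend on $\aux,d,d'$, maximizing over them proves part~\ref{thm:partone}.

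For part~\ref{thm:parttwo}, I would turn the moment bound into a tail bound by a Chernoff/Markov step and then into a privacy guarantee by the usual ``good set / bad set'' split. Fix neighboring $d,d'$ and an auxiliary input, and for $\lambda>0$ set $B=\{o\in\Range:c(o;\M,\aux,d,d')\geq\eps\}$. Markov's inequality applied to the nonnegative random variable $\exp(\lambda\,c(o;\M,\aux,d,d'))$ under $o\sim\M(\aux,d)$ gives
\[
  \Pr_{o\sim\M(\aux,d)}[o\in B] \;\leq\; \frac{\E_{o\sim\M(\aux,d)}[\exp(\lambda\,c(o;\M,\aux,d,d'))]}{\exp(\lambda\eps)} \;\leq\; \exp\big(\alpha_\M(\lambda)-\lambda\eps\big)\,,
\]
and minimizing over $\lambda$ bounds this probability by $\delta=\min_\lambda\exp(\alpha_\M(\lambda)-\lambda\eps)$. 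On $\Range\setminus B$ we have $\Pr[\M(\aux,d)=o]\leq e^{\eps}\Pr[\M(\aux,d')=o]$ by the definition of $B$, so for every $S\subseteq\Range$,
\[
  \Pr[\M(\aux,d)\in S] \;\leq\; \Pr[\M(\aux,d)\in S\setminus B]+\Pr[\M(\aux,d)\in B] \;\leq\; e^{\eps}\Pr[\M(\aux,d')\in S]+\delta\,,
\]
which is exactly $(\eps,\delta)$-differential privacy; running the same argument with $d$ and $d'$ exchanged, which is legitimate because $\alpha_\M$ maximizes over \emph{ordered} neighboring pairs, covers the other direction.

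The chain-rule factorization and the Markov step are routine; the step I expect to need the most care is the conditioning in part~\ref{thm:partone}. One must check that, after conditioning on $o_1,\dots,o_{k-1}$, the inner expectation is \emph{exactly} the log-moment $\alpha_{\M_k}$ evaluated at auxiliary input $((o_1,\dots,o_{k-1}),\aux)$ --- this is precisely where adaptivity (that $\M_k$ sees the earlier outputs) is used, and it is what makes the moments compose \emph{additively} rather than only sub-multiplicatively, which is all a naive bound on the moment generating function of a sum of dependent variables would give. With continuous output spaces, probabilities are replaced by densities throughout; this is standard measure-theoretic bookkeeping and changes nothing essential.
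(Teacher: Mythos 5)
Your proposal is correct and follows essentially the same route as the paper's proof: part~\ref{thm:partone} via the chain-rule factorization of the output distribution, additivity of the privacy loss, and factorization of the moment generating function across the adaptive steps, and part~\ref{thm:parttwo} via Markov's inequality on $\exp(\lambda c(\outcome))$ followed by the split over the bad set $B=\{\outcome : c(\outcome)\geq\eps\}$. Your explicit tower-property/induction justification of the factorization is a slightly more careful rendering of the step the paper labels ``by independence of noise,'' but it is the same argument.
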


In particular, Theorem~\ref{thm:property}.\ref{thm:partone} holds when the mechanisms themselves are chosen based on the (public) output of the previous mechanisms. 

By Theorem~\ref{thm:property}, it suffices to compute, or bound, $\alpha_{\M_i}(\lambda)$ at
each step and sum them to bound the moments of the mechanism
overall. We can then use the tail bound to convert the moments bound
to the $(\eps,\delta)$-differential privacy guarantee. 

The main challenge that remains is to bound the value $\alpha_{\M_t}(\lambda)$ for each
step. In the case of a Gaussian mechanism with random sampling, it suffices to estimate the following moments.  Let $\mu_0$ denote
the probability density function (pdf) of $\calN(0, \sigma^2)$, and $\mu_1$ denote the pdf of $\calN(1, \sigma^2)$. Let $\mu$ be the mixture of two Gaussians
$\mu=(1-q)\mu_0+q\mu_1$.  Then we need to compute
$\alpha(\lambda)=\log\max(E_1, E_2)$ where
\begin{align}
E_1 & = \E_{z \sim \mu_0} [(\mu_0(z) / \mu(z))^\lambda]\,,\label{eq:logmgf1}\\
E_2 & = \E_{z \sim \mu_{\hphantom{0}}} [ (\mu(z) / \mu_0(z))^\lambda]\,.\label{eq:logmgf2}
\end{align}

In the implementation of the \logmgfa accountant, we carry out
numerical integration to compute $\alpha(\lambda)$.  In addition, we
can show the asymptotic bound
\begin{align*}
\alpha(\lambda) &\leq q^2\lambda(\lambda+1)/(1-q)\sigma^2 + O(q^3/\sigma^3)\,.
\end{align*}

Together with Theorem~\ref{thm:property}, the above bound implies our main Theorem~\ref{thm:main}. The details can be found in the \full{Appendix}{full version of the paper~\protect\cite{DL-DP-arxiv}}.

\subsection{Hyperparameter Tuning}\label{sec:hyper}

We identify characteristics of models relevant for privacy and,
specifically, hyperparameters that we can tune in order to balance
privacy, accuracy, and performance. In particular, through
experiments, we observe that model accuracy is more sensitive to 
training parameters such as batch size and noise level than to the
structure of a neural network. 

%
If we try several settings for the hyperparameters, we can trivially
add up the privacy costs of all the settings, possibly via the moments accountant. 
However, since we care only about the setting that gives us the most accurate model,
we can do better, such as applying a version of a result from Gupta et al.~\cite{GuptaLMRT10} \full{restated as Theorem~\ref{thm:hyperparameters} in the Appendix}{(see the full version of the paper for details~\protect\cite{DL-DP-arxiv})}.

We can use insights from theory to reduce the number of hyperparameter settings that need to be tried.
While differentially private optimization of convex
objective functions is best achieved using batch sizes as small as 1,
non-convex learning, which is inherently less stable, benefits from
aggregation into larger batches. At the same time, Theorem~\ref{thm:main} suggests that making batches too large increases the privacy cost, and a 
reasonable tradeoff is to take the number of batches per epoch to be of the same order as the desired number of epochs.
The learning rate in non-private
training is commonly adjusted downwards carefully as the model
converges to a local optimum. In contrast, we never need to decrease the learning rate to a very small value, because differentially private training never reaches a regime where it would be justified. On the other hand, in our experiments, we do find that there is a small benefit to starting with a relatively large learning rate, then linearly decaying it to a smaller value in a few epochs, and keeping it constant afterwards.



\section{Implementation}\label{sec:impl} 

%

We have implemented the differentially private SGD algorithms in
TensorFlow. The source code is available under an Apache 2.0 license from~\href{https://github.com/tensorflow/models}{github.com/tensorflow/models}.

For privacy protection, we need to ``sanitize''
the gradient before using it to update the parameters. In addition, we
need to keep track of the ``privacy spending'' based on how the
sanitization is done. Hence our implementation mainly consists of two
components: \texttt{sanitizer}, which preprocesses the gradient to
protect privacy, and \texttt{privacy\_accountant}, which keeps
track of the privacy spending over the course of training.

Figure~\ref{fig:snippet} contains the TensorFlow code snippet (in
Python) of \mbox{\texttt{DPSGD\_Optimizer}}, which minimizes a loss function
using a differentially private SGD, and \mbox{\texttt{DPTrain}}, which
iteratively invokes \texttt{DPSGD\_Optimizer} using a privacy accountant to bound the total privacy loss.

In many cases, the neural network model may benefit from the processing
of the input by projecting it on the principal directions (PCA) or by feeding it through
a convolutional layer. We implement differentially private PCA and apply pre-trained convolutional layers (learned on public data).

\begin{figure}[t]
\begin{Verbatim}[fontsize=\small]
class DPSGD_Optimizer():
  def __init__(self, accountant, sanitizer):
    self._accountant = accountant
    self._sanitizer = sanitizer

  def Minimize(self, loss, params,
               batch_size, noise_options):
    # Accumulate privacy spending before computing
    # and using the gradients.
    priv_accum_op =
        self._accountant.AccumulatePrivacySpending(
            batch_size, noise_options)
    with tf.control_dependencies(priv_accum_op):
      # Compute per example gradients
      px_grads = per_example_gradients(loss, params)
      # Sanitize gradients
      sanitized_grads = self._sanitizer.Sanitize(
          px_grads, noise_options)
      # Take a gradient descent step
      return apply_gradients(params, sanitized_grads)

def DPTrain(loss, params, batch_size, noise_options):
  accountant = PrivacyAccountant()
  sanitizer = Sanitizer()
  dp_opt = DPSGD_Optimizer(accountant, sanitizer)
  sgd_op = dp_opt.Minimize(
      loss, params, batch_size, noise_options)
  eps, delta = (0, 0)
  # Carry out the training as long as the privacy
  # is within the pre-set limit.
  while within_limit(eps, delta):
    sgd_op.run()
    eps, delta = accountant.GetSpentPrivacy()
\end{Verbatim}
\caption{Code snippet of \texttt{DPSGD\_Optimizer} and \texttt{DPTrain}.}\label{fig:snippet}
\end{figure}

\leader{Sanitizer.} In order to achieve privacy protection, the
 sanitizer needs to perform two operations: (1) limit the
 sensitivity of each individual example by clipping the norm of the
 gradient for each example; and (2) add noise to the gradient of a batch 
 before updating the network parameters.

In TensorFlow, the gradient computation is batched for performance reasons, yielding $\g_B = 1/|B| \sum_{x\in B} \nabla_\btheta\calL(\btheta, x)$ for a batch $B$ of training
examples. To limit the sensitivity of updates, we need to access each individual
$\nabla_\btheta \calL(\btheta, x)$. To this end, we
implemented \texttt{per\_example\_gradient} operator in TensorFlow, as
described by Goodfellow~\cite{Goodfellow15}. This operator can compute a batch of individual
$\nabla_\btheta \calL(\btheta,x)$.  With this implementation there is
only a modest slowdown in training, even for larger batch size.  Our current implementation
supports batched computation for the loss
function $\calL$, where each $x_i$ is singly connected to
$\calL$, allowing us to handle most hidden layers but not, for
example, convolutional layers.

Once we have the access to the per-example gradient, it is easy to
use TensorFlow operators to clip its norm and to add noise. 

\leader{Privacy accountant.} The main component in our implementation is \texttt{PrivacyAccountant}
which keeps track of privacy spending over the course of training.  As
discussed in Section~\ref{sec:approach}, we implemented the \logmgfa
accountant that additively accumulates the log of the moments of the
privacy loss at each step.  Dependent on the noise distribution, one
can compute $\alpha(\lambda)$ by either applying an asymptotic bound,
evaluating a closed-form expression, or applying numerical integration.  The first
option would recover the generic advanced composition theorem, and the
latter two give a more accurate accounting of the privacy loss.

For the Gaussian mechanism we use, $\alpha(\lambda)$ is defined
according to Eqs.~(\ref{eq:logmgf1}) and (\ref{eq:logmgf2}). In our
implementation, we carry out numerical integration to compute both
$E_1$ and $E_2$ in those equations. Also we compute $\alpha(\lambda)$
for a range of $\lambda$'s so we can compute the best possible
$(\eps,\delta)$ values using Theorem~\ref{thm:property}.\ref{thm:parttwo}. We find that for
the parameters of interest to us, it suffices to compute
$\alpha(\lambda)$ for $\lambda\leq 32$.


At any point during training, one can query the privacy loss in the
more interpretable notion of $(\eps,\delta)$ privacy using
Theorem~\ref{thm:property}.\ref{thm:parttwo}. Rogers et al.~\cite{RRUV16-odometers} point out risks associated with adaptive choice of privacy parameters. We avoid their attacks and negative results by fixing the number of iterations and privacy parameters ahead of time. More general implementations of a privacy accountant must correctly distinguish between two modes of operation---as a privacy odometer or a privacy filter (see~{\protect\cite{RRUV16-odometers}} for more details).

\leader{Differentially private PCA.} Principal component analysis (PCA) is 
a useful method for capturing the main features of the input data.  We implement the differentially private PCA algorithm as described in~\cite{DworkTTZ14}. More
specifically, we take a random sample of the training examples, treat
them as vectors, and normalize each vector to unit $\ell_2$ norm to
form the matrix $A$, where each vector is a row in the matrix. We then
add Gaussian noise to the covariance matrix $A^T A$ and compute the
principal directions of the noisy covariance matrix.  Then for each
input example we apply the projection to these principal directions before
feeding it into the neural network. 

We incur a privacy cost due to running a PCA.  However, we find it useful for both
improving the model quality and for reducing the training time, as
suggested by our experiments on the MNIST data. See
Section~\ref{sec:impl} for details.

\leader{Convolutional layers.} Convolutional layers are useful for deep neural networks. However, an efficient per-example gradient computation for convolutional layers
remains a challenge within the TensorFlow framework, which motivates creating
a separate workflow. For example, some recent work argues that even random
convolutions often suffice~\cite{PSZC11, CoxPinto11, Saxe11,TuRVR16,DanielyFS16}.  

Alternatively, we explore the idea of learning convolutional layers on public
data, following Jarrett et al.~\cite{JKRL09}. Such convolutional layers can be based on GoogLeNet or AlexNet features~\cite{Szegedy-et-al-CVPR2015,alexnet} for image models or on pretrained word2vec or GloVe embeddings in language
models~\cite{word2vec,glove}.

\section{Experimental Results}\label{sec:results} 
This section reports on our evaluation of the moments accountant, and results on two popular image datasets: MNIST and CIFAR-10.

\subsection{Applying the \Logmgfa Accountant}

As shown by Theorem~\ref{thm:main}, the \logmgfa accountant
provides a tighter bound on the privacy loss compared to the generic
strong composition theorem.  Here we compare them using some concrete
values.  The overall privacy loss $(\eps,\delta)$ can be computed from
the noise level $\sigma$, the sampling ratio of each lot $q=L/N$ (so
each epoch consists of $1/q$ batches), and the number of epochs $E$
(so the number of steps is $T=E/q$).  We fix the target $\delta
= 10^{-5}$, the value used for our MNIST and CIFAR experiments.

In our experiment, we set $q=0.01$, $\sigma=4$, and
$\delta=10^{-5}$, and compute the value of $\eps$ as a function
of the training epoch $E$.  Figure~\ref{fig:epoch-eps} shows two
curves corresponding to, respectively, using the strong composition
theorem and the \logmgfa accountant. We can see that we get a much
tighter estimation of the privacy loss by using the \logmgfa
accountant. For examples, when $E=100$, the values are $9.34$ and
$1.26$ respectively, and for $E=400$, the values are $24.22$ and
$2.55$ respectively. That is, using the \logmgfa bound, we achieve
$(2.55,10^{-5})$-differential privacy, whereas previous
techniques only obtain the significantly worse
guarantee of $(24.22, 10^{-5})$.

\begin{figure}[h]
{\centering \includegraphics[width=\columnwidth]{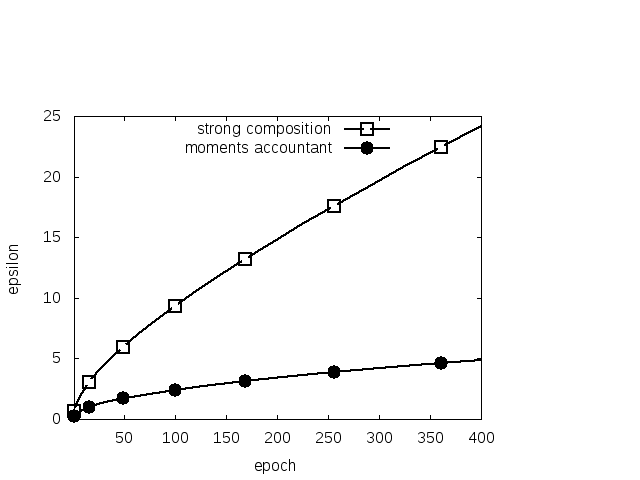}}
\caption{The $\eps$ value as a function of epoch $E$ for
 $q=0.01$, $\sigma=4$, $\delta=10^{-5}$, using the strong
 composition theorem and the \logmgfa accountant
 respectively.}\label{fig:epoch-eps}
\end{figure}

%

\subsection{MNIST} 
\begin{figure*}[t]
\begin{tabular}{ccc}
\includegraphics[width=2.25in]{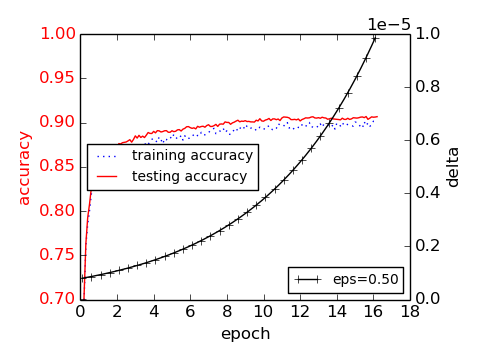} &
\includegraphics[width=2.25in]{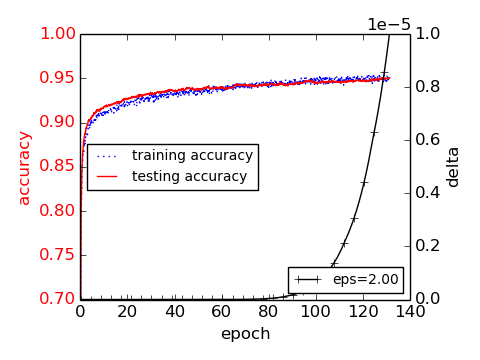} &
\includegraphics[width=2.25in]{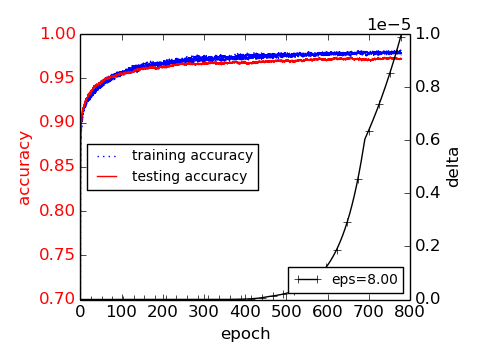} \\
(1) Large noise & (2) Medium noise & (3) Small noise\\
\end{tabular}
\caption{Results on the accuracy for different noise levels
  on the MNIST dataset. In all the experiments, the network
  uses $60$ dimension PCA projection, $1{,}000$ hidden units, and is
  trained using lot size $600$ and clipping threshold $4$. The noise
  levels $(\sigma, \sigma_p)$ for training the neural network and for
  PCA projection are set at ($8$, $16$), ($4$, $7$), and ($2$, $4$),
  respectively, for the three experiments.}\label{fig:bestmnist}
\end{figure*}

We conduct experiments on the standard MNIST dataset for handwritten digit recognition consisting of  $60{,}000$ training examples and $10{,}000$ testing examples~\cite{mnist}. Each example is a
$28\times 28$ size gray-level image. We use a simple feed-forward neural
network with ReLU units and softmax of 10 classes (corresponding to
the 10 digits) with cross-entropy loss and an optional PCA input layer.

\paragraph{Baseline model}

Our baseline model uses a $60$-dimensional PCA projection layer and a single
hidden layer with $1{,}000$ hidden units. Using the lot size of $600$, we can
reach accuracy of $98.30\%$ in about $100$ epochs. This result is
consistent with what can be achieved with a vanilla neural
network~\cite{mnist}. 

\paragraph{Differentially private model}

For the differentially private version, we experiment with the same
architecture with a $60$-dimensional PCA projection layer, a single
$1{,}000$-unit ReLU hidden layer, and a lot size of $600$.  To limit
sensitivity, we clip the gradient norm of each layer at $4$. We report
results for three choices of the noise scale, which we call small
($\sigma=2, \sigma_p=4$), medium ($\sigma=4, \sigma_p=7$), and large
($\sigma=8, \sigma_p=16$). Here $\sigma$ represents the noise level for
training the neural network, and $\sigma_p$ the noise level for PCA
projection. The learning rate is set at $0.1$ initially and linearly
decreased to $0.052$ over $10$ epochs and then fixed to $0.052$ thereafter.  We have also
experimented with multi-hidden-layer networks. For MNIST, we found
that one hidden layer combined with PCA works better than a two-layer
network.

Figure~\ref{fig:bestmnist}
shows the results for different noise levels. In each plot, we show
the evolution of the training and testing accuracy as a function of
the number of epochs as well as the corresponding $\delta$ value,
keeping $\eps$ fixed.  We achieve $90\%$, $95\%$, and $97\%$ test set
accuracy for $(0.5,10^{-5})$, $(2,10^{-5})$, and
$(8,10^{-5})$-differential privacy respectively.

One attractive consequence of applying differentially private SGD is the small difference between the model's accuracy on the training and the test sets, which is consistent with the theoretical argument that differentially private training generalizes well~\cite{BassilyNSSSU15}. In contrast, the gap between training and testing accuracy in non-private training, i.e., evidence of overfitting, increases with the number of epochs.

By using the \logmgfa accountant, we can obtain a $\delta$ value for any
given $\eps$. We record the accuracy for different $(\eps,\delta)$
pairs in Figure~\ref{fig:allmnist}. In the figure, each curve
corresponds to the best accuracy achieved for a fixed $\delta$, as it
varies between $10^{-5}$ and $10^{-2}$. For example, we
can achieve $90\%$ accuracy for $\eps=0.25$ and $\delta=0.01$. As can
be observed from the figure, for a fixed $\delta$, varying the value
of $\eps$ can have large impact on accuracy, but for any fixed
$\eps$, there is less difference with different $\delta$ values.

\begin{figure}[h]
\begin{tabular}{c}
\includegraphics[width=3.in]{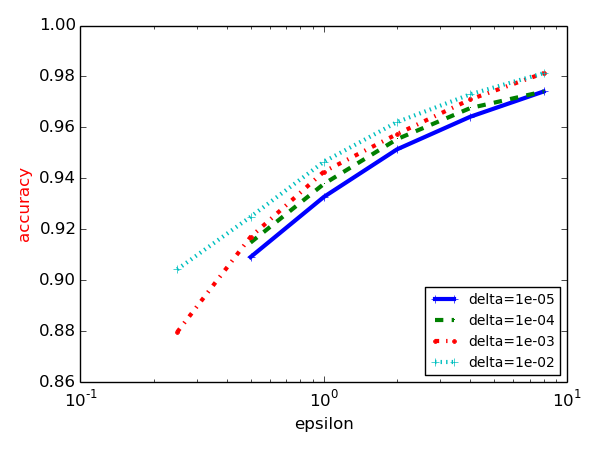}
\end{tabular}
\caption{Accuracy of various $(\eps, \delta)$ privacy values on the MNIST dataset. Each curve corresponds to a different $\delta$ value.}\label{fig:allmnist}
\end{figure}

\paragraph{Effect of the parameters}

Classification accuracy is determined by multiple factors that must be
carefully tuned for optimal performance. These factors include the
topology of the network, the number of PCA dimensions and the number
of hidden units, as well as parameters of the training procedure such
as the lot size and the learning rate.  Some parameters are specific
to privacy, such as the gradient norm clipping bound and the noise
level.

\begin{figure*}[t]
\begin{tabular}{ccc}
\includegraphics[width=2.25in]{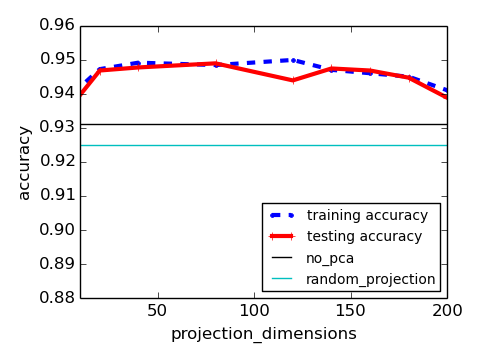} &
\includegraphics[width=2.25in]{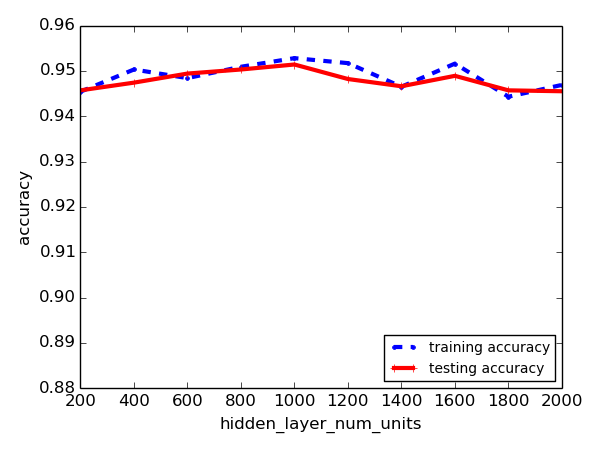} &
\includegraphics[width=2.25in]{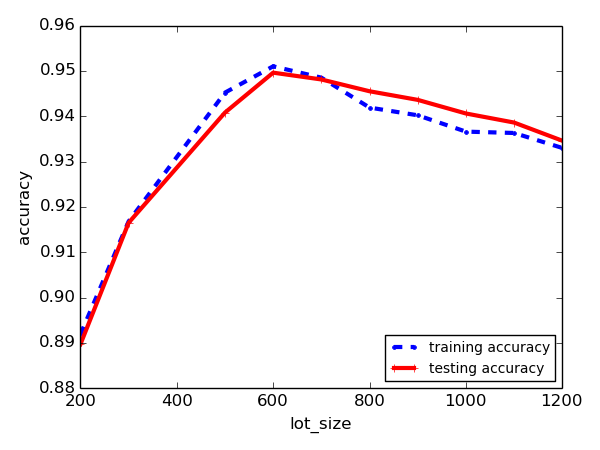} \\
(1) variable projection dimensions & (2) variable hidden units & (3) variable lot size\\
\includegraphics[width=2.25in]{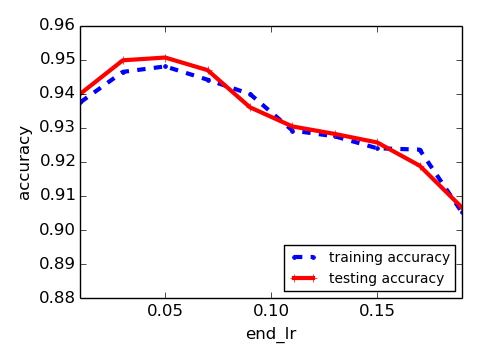} &
\includegraphics[width=2.25in]{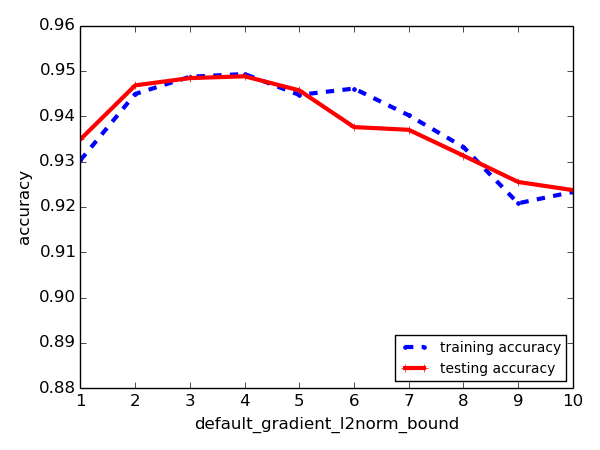} &
\includegraphics[width=2.25in]{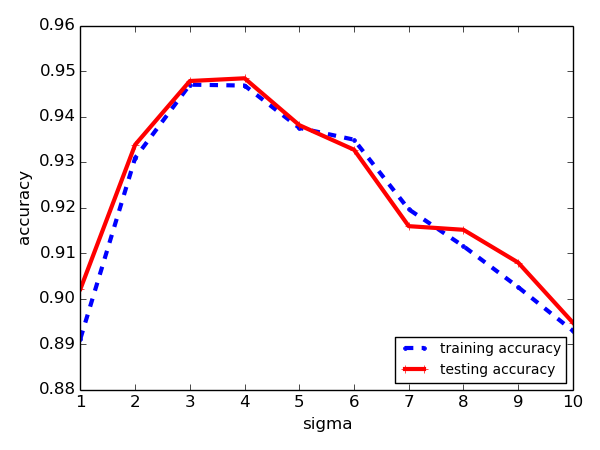}\\
(4) variable learning rate &  (5) variable gradient clipping norm & (6) variable noise level\\
\end{tabular}
\caption{MNIST accuracy when one parameter varies, and the others are
  fixed at reference values.}\label{fig:var}
\end{figure*}

To demonstrate the effects of these parameters, we manipulate them
individually, keeping the rest constant. We set the reference values as follows: 60 PCA dimensions, 1{,}000 hidden units, 600 lot size, gradient norm bound of 4, initial learning rate of 0.1 decreasing to a final learning rate of 0.052 in 10 epochs, and noise $\sigma$ equal to 4 and 7 respectively for training the neural network parameters and for the PCA projection. For each combination of values, we train until the
point at which $(2,10^{-5})$-differential privacy would be violated
(so, for example, a larger $\sigma$ allows more epochs of training). The
results are presented in Figure~\ref{fig:var}.


\leader{PCA projection.} In our experiments, the accuracy is fairly
stable as a function of the PCA dimension, with the best results
achieved for $60$. (Not doing PCA reduces accuracy by about $2\%$.)
Although in principle the PCA projection layer can be replaced by an
additional hidden layer, we achieve better accuracy by training the
PCA layer separately. By reducing the input size from $784$ to $60$,
PCA leads to an almost $10\times$ reduction in training time. The
result is fairly stable over a large range of the noise levels
for the PCA projection and consistently better than the accuracy using random
projection, which is at about $92.5\%$ and shown as a horizontal line in the plot.

\leader{Number of hidden units.} Including more hidden units makes it
easier to fit the training set. For non-private training, it is often
preferable to use more units, as long as we employ techniques to avoid
overfitting. However, for differentially private training, it is not a
priori clear if more hidden units improve accuracy, as more hidden
units increase the sensitivity of the gradient, which leads to more
noise added at each update.
 
Somewhat counterintuitively, increasing the number of hidden units
does not decrease accuracy of the trained model. One possible
explanation that calls for further analysis is that larger networks
are more tolerant to noise.  This property is quite encouraging as it
is common in practice to use very large networks. 

\leader{Lot size.}  According to Theorem~\ref{thm:main}, we can run
$N/L$ epochs while staying within a constant privacy budget. Choosing
the lot size must balance two conflicting objectives. On the one hand,
smaller lots allow running more epochs, i.e., passes over data,
improving accuracy. On the other hand, for a larger lot, the added
noise has a smaller relative effect.

Our experiments show that the lot size has a relatively large impact on
accuracy. Empirically, the best lot size is roughly $\sqrt{N}$ where
$N$ is the number of training examples.

\leader{Learning rate.} Accuracy is stable for a learning rate in the
range of $[0.01, 0.07]$ and peaks at 0.05, as shown in
Figure~\ref{fig:var}(4). However, accuracy decreases significantly if
the learning rate is too large. Some additional experiments suggest
that, even for large learning rates, we can reach similar levels of
accuracy by reducing the noise level and, accordingly, by training
less in order to avoid exhausting the privacy budget.

\leader{Clipping bound.} Limiting the gradient norm has two opposing
effects: clipping destroys the unbiasedness of the gradient estimate,
and if the clipping parameter is too small, the average clipped
gradient may point in a very different direction from the true
gradient. On the other hand, increasing the norm bound $C$ forces us
to add more noise to the gradients (and hence the parameters), since
we add noise based on $\sigma C$. In practice, a good way to choose a
value for $C$ is by taking the median of the norms of the unclipped
gradients over the course of training.

\leader{Noise level.} By adding more noise, the per-step
privacy loss is proportionally smaller, so we can run more epochs within a given cumulative privacy budget. In Figure~\ref{fig:var}(5), the $x$-axis is
the noise level $\sigma$.
The choice of this value has a large impact on accuracy.

\bigskip

From the experiments, we observe the following.

\begin{enumerate}

\item The PCA projection improves both model accuracy and training
  performance. Accuracy is quite stable over a large range of choices for
  the projection dimensions and the noise level used in the PCA stage.

 \item The accuracy is fairly stable over the network size. When we
 can only run smaller number of epochs, it is more beneficial to use
 a larger network.

 \item The training parameters, especially the lot size and the
 noise scale $\sigma$, have a large impact on the model
 accuracy. They both determine the ``noise-to-signal'' ratio of the
 sanitized gradients as well as the number of epochs we are able to
 go through the data before reaching the privacy limit.
\end{enumerate}

Our framework allows for adaptive control of the
training parameters, such as the lot size, the gradient norm bound $C$, and
noise level $\sigma$. Our initial experiments with decreasing noise as
training progresses did not show a significant improvement, but it
is interesting to consider more sophisticated schemes for adaptively
choosing these parameters.


\subsection{CIFAR} 
\begin{figure*}[!]
	\begin{tabular}{ccc}
		\includegraphics[width=2.25in]{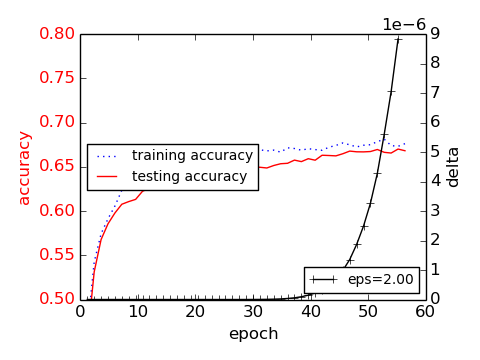}&
		\includegraphics[width=2.25in]{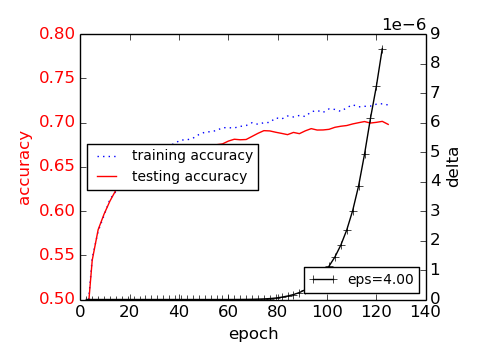}&
		\includegraphics[width=2.25in]{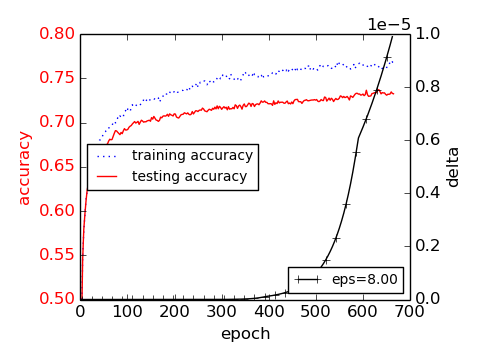}\\
		(1) $\eps=2$ & (2) $\eps=4$ & (3) $\eps=8$\\
	\end{tabular}
	\caption{Results on accuracy for different noise levels on CIFAR-10. With $\delta$ set to $10^{-5}$, we achieve accuracy $67\%$, $70\%$, and $73\%$, with $\eps$ being $2$, $4$, and $8$, respectively. The first graph uses a lot size of 2{,}000, (2) and (3) use a lot size of 4{,}000. In all cases, $\sigma$ is set to $6$, and clipping is set to~$3$.
	}\label{fig:cifar}
\end{figure*}
We also conduct experiments on the CIFAR-10 dataset, which consists of color images classified into 10 classes such as ships, cats, and dogs, and partitioned into $50{,}000$ training examples and $10{,}000$ test examples~\cite{cifar_webpage}. Each example is a $32\times32$ image with three channels (RGB). For this learning task, nearly all successful networks use convolutional layers. The CIFAR-100 dataset has similar parameters, except that images are classified into 100 classes; the examples and the image classes are different from those of CIFAR-10.

We use the network architecture from the TensorFlow convolutional neural networks tutorial~\cite{tensorflow_cnn}. Each $32\times 32$ image is first cropped to a $24\times 24$ one by taking the center patch.  The network architecture consists of two convolutional layers followed by two fully connected layers. The convolutional layers use $5\times 5$ convolutions with stride 1, followed by a ReLU and $2\times 2$ max pools, with 64 channels each. Thus the first convolution outputs a $12 \times 12 \times 64$ tensor for each image, and the second outputs a $6\times 6 \times 64$ tensor. The latter is flattened to a vector that gets fed into a fully connected layer with $384$ units, and another one of the same size. 

This architecture, non-privately, can get to about $86\%$ accuracy in $500$
epochs. Its simplicity makes it an appealing choice for our work. We
should note however that by using deeper networks with different
non-linearities and other advanced techniques, one can obtain
significantly better accuracy, with the state-of-the-art being
about $96.5\%$~\cite{Graham14a}.
 
As is standard for such image datasets, we use {\em data augmentation}
during training. For each training image, we generate a new distorted
image by randomly picking a $24\times 24$ patch from the image,
randomly flipping the image along the left-right direction, and
randomly distorting the brightness and the contrast of the image. In
each epoch, these distortions are done independently. We refer the
reader to the TensorFlow tutorial~\cite{tensorflow_cnn} for additional details. 

As the convolutional layers have shared parameters, computing
per-example gradients has a larger computational overhead. Previous
work has shown that convolutional layers are often transferable:
parameters learned from one data\-set can be used on another one without
retraining~\cite{JKRL09}. We treat the CIFAR-100 dataset as a public dataset and use
it to train a network with the same architecture. We use the
convolutions learned from training this dataset. Retraining only the
fully connected layers with this architecture for about 250 epochs
with a batch size of 120 gives us approximately $80\%$ accuracy, which
is our non-private baseline.

\paragraph{Differentially private version}

For the differentially private version, we use the same
architecture. As discussed above, we use pre-trained convolutional layers. The
fully connected layers are initialized from the pre-trained network as
well.  We train the softmax layer, and either the top or both fully
connected layers. Based on looking at gradient norms, the softmax layer
gradients are roughly twice as large as the other two layers, and we
keep this ratio when we try clipping at a few different values between
3 and 10. The lot size is an additional knob that we tune: we
tried $600$, $2{,}000$, and $4{,}000$. With these settings, the per-epoch training time increases from approximately 40 seconds to 180 seconds.

In Figure~\ref{fig:cifar}, we show the evolution of the accuracy and
the privacy cost, as a function of the number of epochs, for a few
different parameter settings.

The various parameters influence the accuracy one gets, in ways not
too different from that in the MNIST experiments. A lot size of 600
leads to poor results on this dataset and we need to increase it to
2{,}000 or more for results reported in Figure~\ref{fig:cifar}.

Compared to the MNIST dataset, where the difference in accuracy between a non-private baseline and a private model is about 1.3\%, the corresponding drop in accuracy in our CIFAR-10 experiment is much larger (about 7\%). We leave closing this gap as an interesting test for future research in differentially private machine learning.


\section{Related Work}\label{sec:related} 
The problem of privacy-preserving data mining, or machine learning, has been a focus of active work in several research communities since the late 90s~\cite{AgrawalS00,LindellPinkas00}. The existing literature can be broadly classified along several axes: the class of models, the learning algorithm, and the privacy guarantees.

\leader{Privacy guarantees.} Early works on privacy-preserving learning were done in the framework of secure function evaluation (SFE) and secure multi-party computations (MPC), where the input is split between two or more parties, and the focus is on minimizing information leaked during the joint computation of some agreed-to functionality. In contrast, we assume that data is held centrally, and we are concerned with leakage from the functionality's output (i.e., the model).

Another approach, $k$-anonymity and closely related notions~\cite{sweeney2002-k-anonymity}, seeks to offer a degree of protection to underlying data by generalizing and suppressing certain identifying attributes. The approach has strong theoretical and empirical limitations~\cite {Aggarwal-curse,Brickell-Shmatikov} that make it all but inapplicable to de-anonymization of high-dimensional, diverse input datasets. Rather than pursue input sanitization, we keep the underlying raw records intact and perturb derived data instead.

The theory of differential privacy, which provides the analytical framework for our work, has been applied to a large collection of machine learning tasks that differed from ours either in the training mechanism or in the target model.

The \logmgfa accountant is closely related to the notion of R\'enyi differential privacy~\cite{Mironov16Renyi}, which proposes (scaled) $\alpha(\lambda)$ as a means of quantifying privacy guarantees. In a concurrent and independent work Bun and Steinke~\cite{BunS16} introduce a relaxation of differential privacy (generalizing the work of Dwork and Rothblum~\cite{DworkRoth14}) defined via a linear upper bound on $\alpha(\lambda)$. Taken together, these works demonstrate that the \logmgfa accountant is a useful technique for theoretical and empirical analyses of complex privacy-preserving algorithms.

\leader{Learning algorithm.} A common target for learning with privacy is a class of convex optimization problems amenable to a wide variety of techniques~\cite{DworkLei09,chaudhuri2011,KST12}. In concurrent work, Wu et al.~achieve 83\% accuracy on MNIST via convex empirical risk minimization~\cite{WCJN16-RDBMS}. Training multi-layer neural networks is non-convex, and typically solved by an application of SGD, whose theoretical guarantees are poorly understood.

For the CIFAR neural network we incorporate differentially private training of the PCA projection matrix~\cite{DworkTTZ14}, which is used to reduce dimensionality of inputs.

\leader{Model class.} The first end-to-end differentially private system was evaluated on the Netflix Prize dataset~\cite{McSherryMironov09}, a version of a collaborative filtering problem. Although the problem shared many similarities with ours---high-dimensional inputs, non-convex objective function---the approach taken by McSherry and Mironov differed significantly. They identified the core of the learning task, effectively sufficient statistics, that can be computed in a differentially private manner via a Gaussian mechanism. In our approach no such sufficient statistics exist.

In a recent work Shokri and Shmatikov~\cite{ShokriShmatikov15} designed and evaluated a system for \emph{distributed} training of a deep neural network. Participants, who hold their data closely, communicate sanitized updates to a central authority. The sanitization relies on an additive-noise mechanism, based on a sensitivity estimate, which could be improved to a hard sensitivity guarantee. They compute privacy loss per parameter (not for an entire model). By our preferred measure, the total privacy loss per participant on the MNIST dataset exceeds several thousand.

A different, recent approach towards differentially private deep learning is explored by Phan et al.~\cite{PhanWWD16}. This work focuses on learning autoencoders. Privacy is based on perturbing the objective functions of these autoencoders.

\section{Conclusions}\label{sec:conclusions}
We demonstrate the training of deep neural
networks with differential privacy, incurring a modest total privacy
loss, computed over entire models with many parameters.  In our
experiments for MNIST, we achieve 97\% training accuracy and for CIFAR-10 we achieve 73\% accuracy, both with
$(8,10^{-5})$-differential privacy.  Our algorithms are based on a differentially private
version of stochastic gradient descent; they run on the TensorFlow
software library for machine learning. Since our approach applies directly to gradient computations, it can be adapted to many other classical and more recent first-order optimization methods, such as NAG~\cite{Nesterov}, Momentum~\cite{RHW-Momentum}, AdaGrad~\cite{DHS11-AdaGrad}, or SVRG~\cite{JognsonZhang13-SVRG}.

A new tool, which may be of independent interest, is a 
mechanism for tracking privacy loss, the moments accountant. It permits tight automated analysis of the privacy loss of complex composite mechanisms that are currently beyond the reach of advanced composition theorems.

A number of avenues for further work are attractive. In particular, we
would like to consider other classes of deep networks. Our experience with
MNIST and CIFAR-10 should be helpful, but we see many opportunities for new research, for example in applying our techniques to LSTMs used for language modeling tasks. In addition, we would like to obtain additional improvements in
accuracy. Many training datasets are much larger than those of MNIST
and CIFAR-10; accuracy should benefit from their size.

\section{Acknowledgments}\label{sec:acknowledgments}
We are grateful to {\'U}lfar Erlingsson and Dan Ramage for many useful discussions, and to Mark Bun and Thomas Steinke for sharing a draft of~\cite{BunS16}.
\vfill

\bibliographystyle{abbrv}
\bibliography{dp-dl}

\full{
\clearpage
\appendix
\newcommand{\e}{\mathbf{e}}
\newcommand{\zero}{\mathbf{0}}
\newcommand{\ud}{\,\mbox{d}}
\def\protectedrefA{\ref{thm:property}}
\section{Proof of Theorem~\protectedrefA} 
Here we restate and prove Theorem~\ref{thm:property}.
\setcounter{thm}{1}\begin{thm}\label{thm:property_supp}
Let $\alpha_\M(\lambda)$ defined as \[\alpha_\M(\lambda) \eqdef \max_{\aux, d, d'} \alpha_\M(\lambda; \aux, d, d'),\]
where the maximum is taken over all auxiliary inputs and neighboring databases $d,d'$. Then

\begin{enumerate}
\item \textbf{[Composability]}
Suppose that a mechanism $\M$ consists of a sequence of adaptive mechanisms $\M_1, \ldots, \M_k$ where $\M_i\colon \prod_{j=1}^{i-1}\Range_j\times \Domain \to\Range_i$. Then, for any $\lambda$
\[\alpha_\M(\lambda) \leq \sum_{i=1}^k \alpha_{\M_i}(\lambda)\,.\]

\item \textbf{[Tail bound]}
For any $\eps>0$, the mechanism $\M$ is $(\eps, \delta)$-differentially private for
\[\delta=\min_{\lambda} \exp(\alpha_\M(\lambda) -\lambda \eps)\,.\]
\end{enumerate}
\end{thm}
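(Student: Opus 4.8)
The plan is to establish the two parts independently; each follows from a short, self-contained argument, and the substantive ideas are (i) that privacy loss telescopes under adaptive composition and (ii) that a bound on the moment generating function yields a tail bound via Markov's inequality.

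For \textbf{composability}, I would first record the structural fact that an outcome of $\M$ is a tuple $o = (o_1,\dots,o_k)$ with $o_i\in\Range_i$, and that, writing $\aux_i \eqdef (\aux, o_1,\dots,o_{i-1})$ for the auxiliary input fed to $\M_i$, the chain rule for the probability of the tuple gives the telescoping identity
\[
c(o;\M,\aux,d,d') \;=\; \sum_{i=1}^k c(o_i;\M_i,\aux_i,d,d')\,.
\]
Hence $\exp\big(\alpha_\M(\lambda;\aux,d,d')\big) = \E_{o\sim\M(\aux,d)}\big[\prod_{i=1}^k \exp(\lambda\, c(o_i;\M_i,\aux_i,d,d'))\big]$. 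I would then peel off the factors one at a time, starting from the last: conditioning on the prefix $o_{<k}=(o_1,\dots,o_{k-1})$, the conditional law of $o_k$ under a run of $\M$ on $d$ is exactly $\M_k(\aux_k,d)$, so the conditional expectation of $\exp(\lambda\,c(o_k;\M_k,\aux_k,d,d'))$ equals $\exp(\alpha_{\M_k}(\lambda;\aux_k,d,d'))\le \exp(\alpha_{\M_k}(\lambda))$ --- a bound that does not depend on the prefix and can therefore be pulled out of the outer expectation. Iterating (equivalently, inducting on $k$) gives $\exp(\alpha_\M(\lambda;\aux,d,d')) \le \prod_{i=1}^k \exp(\alpha_{\M_i}(\lambda))$; taking logs and then the maximum over $\aux,d,d'$ yields $\alpha_\M(\lambda)\le\sum_i\alpha_{\M_i}(\lambda)$. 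The point that needs care --- and the one I regard as the crux --- is the justification that along a run of $\M$ on $d$ the conditional distribution of the $i$-th coordinate given the earlier ones is governed by $\M_i$ with auxiliary input $\aux_i$; this is exactly the definition of adaptive composition, and it is what legitimizes the uniform-in-prefix bound even when $\M_i$ itself is selected as a function of $o_{<i}$.

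For the \textbf{tail bound}, fix $\eps>0$, neighboring $d,d'$, auxiliary input $\aux$, and $\lambda>0$, and let $B\eqdef\{o\in\Range : c(o;\M,\aux,d,d')>\eps\}$. For any $S\subseteq\Range$ I would write $\Pr[\M(\aux,d)\in S] = \Pr[\M(\aux,d)\in S\setminus B] + \Pr[\M(\aux,d)\in S\cap B]$. On $S\setminus B$ the pointwise inequality $\Pr[\M(\aux,d)=o]\le e^\eps\Pr[\M(\aux,d')=o]$ holds by definition of $B$, so the first term is at most $e^\eps\Pr[\M(\aux,d')\in S]$. The second term is at most $\Pr_{o\sim\M(\aux,d)}[\exp(\lambda\,c(o))\ge\exp(\lambda\eps)]$, which by Markov's inequality is at most $\exp(-\lambda\eps)\,\E_{o\sim\M(\aux,d)}[\exp(\lambda\,c(o))] = \exp(\alpha_\M(\lambda;\aux,d,d')-\lambda\eps)\le\exp(\alpha_\M(\lambda)-\lambda\eps)$. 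Adding the two bounds gives $\Pr[\M(\aux,d)\in S]\le e^\eps\Pr[\M(\aux,d')\in S]+\exp(\alpha_\M(\lambda)-\lambda\eps)$; since $\aux,d,d',S$ were arbitrary and this holds for every $\lambda>0$, minimizing over $\lambda$ gives $(\eps,\delta)$-differential privacy with $\delta=\min_\lambda\exp(\alpha_\M(\lambda)-\lambda\eps)$. This is the standard route from a moment bound to a tail bound; the only mild subtleties are the pointwise comparison on $S\setminus B$ and treating probability densities and discrete masses uniformly, both of which are routine.
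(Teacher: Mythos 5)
Your proposal is correct and follows essentially the same route as the paper: the composability part rests on the same telescoping identity for the privacy loss followed by factorizing the moment generating function term by term (your iterated-conditioning version is just a slightly more careful rendering of the paper's one-shot ``by independence of noise'' factorization), and the tail-bound part is the identical Markov-plus-bad-set argument with $B=\{o : c(o)>\eps\}$. No gaps.
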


\begin{proof}
\leader{Composition of moments.}
  For brevity, let $\M_{1:i}$ denote $(\M_1, \ldots, \M_i)$, and similarly let $\outcome_{1:i}$ denote $(\outcome_1, \ldots, \outcome_i)$. For neighboring databases $d, d' \in D^n$, and a sequence of outcomes $\outcome_1, \ldots, \outcome_k$ we write
\begin{align*}
c(&\outcome_{1:k}; \M_{1:k}, o_{1:(k-1)}, d, d')\\ &= \log\frac{\Pr[\M_{1:k}(d; o_{1:(k-1)}) = \outcome_{1:k}]}{\Pr[\M_{1:k}(d'; o_{1:(k-1)}) = \outcome_{1:k}]}\\
&=  \log \prod_{i=1}^k \frac{\Pr[\M_i(d) = \outcome_i \mid \M_{1:(i-1)}(d) = \outcome_{1:(i-1)}]}{\Pr[\M_i(d') = \outcome_i \mid \M_{1:(i-1)}(d') = \outcome_{1:(i-1)}]}\\
&= \sum_{i=1}^k \log \frac{\Pr[\M_i(d) = \outcome_i \mid \M_{1:(i-1)}(d) = \outcome_{1:(i-1)}]}{\Pr[\M_i(d') = \outcome_i \mid \M_{1:(i-1)}(d') = \outcome_{1:(i-1)}]}\\
&= \sum_{i=1}^k c(\outcome_i; \M_i, \outcome_{1:(i-1)},d, d').
  \end{align*}
Thus 
\begin{align*}
  \E&_{\outcome'_{1:k} \sim \M_{1:k}(d)}\big[\exp(\lambda c(\outcome'_{1:k} ; \M_{1:k}, d, d'))\bigm| \forall i<k\colon o'_i=o_i\big]\\
  &\;\;\;=   \E_{\outcome'_{1:k} \sim \M_{1:k}(d)}\left[\exp\left(\lambda \sum_{i=1}^k c(\outcome'_i; \M_i, \outcome_{1:(i-1)},d, d')\right)\right]\\
    &\;\;\;=  \E_{\outcome'_{1:k} \sim \M_{1:k}(d)}\left[\prod_{i=1}^k\exp\big(\lambda c(\outcome'_i; \M_i,\outcome_{1:(i-1)}, d, d')\big)\right]\tag{by independence of noise}\\
      &\;\;\;= \prod_{i=1}^k \E_{\outcome'_{i} \sim \M_{i}(d)}\left[\exp(\lambda c(\outcome'_i; \M_i, \outcome_{1:(i-1)}, d, d')) \right]\\
        &\;\;\;= \prod_{i=1}^k \exp\big(\alpha_{\M_i}(\lambda;\outcome_{1:(i-1)}, d, d')\big)\\
        &\;\;\;= \exp\left(\sum_{i=1}^k \alpha_i(\lambda;\outcome_{1:(i-1)}, d, d')\right).
  \end{align*}
  The claim follows.

\leader{Tail bound by moments.}
  The proof is based on the standard Markov's inequality argument used in proofs of measure concentration. We have
  \begin{align*}
    \Pr_{\outcome \sim \M(d)}&[c(\outcome) \geq \eps]\\ &= \Pr_{\outcome \sim \M(d)}[\exp(\lambda c(\outcome)) \geq \exp(\lambda\eps))]\\
    &\leq \frac{\E_{\outcome \sim \M(d)}[\exp(\lambda c(\outcome))]}{\exp(\lambda\eps)}\\
      &\leq \exp(\alpha - \lambda\eps).
    \end{align*}
Let $B = \{\outcome\colon c(\outcome) \geq \eps\}$. Then for any $S$,
\begin{align*}\Pr[&M(d) \in S] \\&= \Pr[M(d) \in S \cap B^c] + \Pr[M(d)
    \in S \cap B]\\
&\leq \exp(\eps)\Pr[M(d') \in S \cap B^c] + \Pr[M(d) \in B]\\
&\leq \exp(\eps)\Pr[M(d') \in S] + \exp(\alpha  - \lambda \eps).
\end{align*}
  The second part follows by an easy calculation.
  \end{proof}
The proof demonstrates a tail bound on the
privacy loss, making it stronger than differential privacy for a fixed value
of $\eps,\delta$.

\mycomment{If one wants to get $(\eps,\delta)$-differential privacy for the
composition of $k$ Gaussian mechanisms, the standard argument of going
through the strong composition theorem would require
$\sigma^2 \approxeq
k\log \frac{2}{\delta} \log\frac{3k}{\delta}/\eps^2$. In contrast,
using lemma~\ref{lem:gaussian_logmgf}, along with
Theorem~\ref{thm:property_supp}
allows us to take $\sigma^2 \approxeq 4k \log \frac{1}{\delta}
/ \eps^2$. For $\frac{3k}{\delta} > \exp(4)$, the latter bound is
better.
}

\def\protectedrefB{\ref{lem:sampled_gaussian_mgf}}
\section{Proof of Lemma~\protectedrefB} 

The proof of the main theorem relies on the following moments bound on
Gaussian mechanism with random sampling.
\begin{lem}
\label{lem:sampled_gaussian_mgf}
Suppose that $f\colon D \rightarrow \R^p$ with $\|f(\cdot)\|_2 \leq 1$. Let $\sigma \geq 1$ and let $J$ be a sample from $[n]$ where each $i \in [n]$ is chosen independently with probability $q < \frac{1}{16\sigma}$. Then for any positive integer $\lambda \leq \sigma^2\ln \frac{1}{q\sigma}$, the mechanism $\M(d) = \sum_{i \in J}f(d_i) + \calN(0,\sigma^2 \Id)$ satisfies  
\begin{align*}
\alpha_{\M}(\lambda) &\leq \frac{q^2\lambda(\lambda+1)}{(1-q)\sigma^2} + O(q^3\lambda^3/\sigma^3).
  \end{align*}
\end{lem}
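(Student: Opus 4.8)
The plan is to follow the standard recipe for the sampled Gaussian mechanism: first reduce the $p$-dimensional, $n$-record problem to a one-dimensional mixture of Gaussians, then bound the moments in \eqr{eq:logmgf1}--\eqr{eq:logmgf2} by a \emph{finite} binomial expansion whose low-order terms are computed in closed form and whose tail is controlled using the hypotheses $q<1/(16\sigma)$ and $\lambda\leq\sigma^2\ln(1/(q\sigma))$.

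First I would reduce to one dimension. By definition $\alpha_\M(\lambda)=\max_{d\sim d'}\alpha_\M(\lambda;d,d')$, and two neighbouring databases differ in a single record, say the $n$-th, present in $d$ and absent in $d'$. Conditioning on the restriction of the random sample $J$ to $[n-1]$, the contribution $v=\sum_{i\in J\cap[n-1]}f(d_i)$ is common to both outputs, so after translating by $-v$ we compare $\calN(0,\sigma^2\Id_p)$ (the conditional law of $\M(d')$) with $(1-q)\calN(0,\sigma^2\Id_p)+q\calN(f(d_n),\sigma^2\Id_p)$ (that of $\M(d)$). By spherical symmetry the privacy loss depends on $f(d_n)$ only through $\|f(d_n)\|_2$; since larger displacement only increases it, it suffices to take $f(d_n)=e_1$ and project onto the first coordinate, the remaining $p-1$ coordinates cancelling. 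This yields exactly the one-dimensional setting $\mu_0=\calN(0,\sigma^2)$, $\mu_1=\calN(1,\sigma^2)$, $\mu=(1-q)\mu_0+q\mu_1$, with $\alpha_\M(\lambda)=\log\max(E_1,E_2)$ for $E_1=\E_{z\sim\mu_0}[(\mu_0(z)/\mu(z))^\lambda]$ and $E_2=\E_{z\sim\mu}[(\mu(z)/\mu_0(z))^\lambda]$.

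Next I would expand $E_1,E_2$ as finite binomial sums, choosing the base point so the first-order term vanishes. Since $\lambda$ is a positive integer, changing measure gives $E_2=\E_{z\sim\mu_0}[(\mu(z)/\mu_0(z))^{\lambda+1}]$ and $E_1=\E_{z\sim\mu}[(\mu_0(z)/\mu(z))^{\lambda+1}]$; writing $\mu/\mu_0=1+q(\mu_1/\mu_0-1)$ and $\mu_0/\mu=1-q(\mu_1-\mu_0)/\mu$ and expanding, the $k=1$ terms are proportional to $\E_{z\sim\mu_0}[\mu_1/\mu_0-1]=0$ and $\E_{z\sim\mu}[(\mu_1-\mu_0)/\mu]=\int(\mu_1-\mu_0)=0$, so both expansions begin $1+0+(\text{second order})+\cdots$. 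For the $k=2$ terms I would use the elementary identity $\E_{z\sim\mu_0}[(\mu_1(z)/\mu_0(z))^j]=\exp(j(j-1)/2\sigma^2)$, which gives $\E_{z\sim\mu_0}[(\mu_1/\mu_0-1)^2]=\exp(1/\sigma^2)-1$ and, via $\mu\geq(1-q)\mu_0$, $\E_{z\sim\mu}[((\mu_1-\mu_0)/\mu)^2]\leq(\exp(1/\sigma^2)-1)/(1-q)$. Since $\binom{\lambda+1}{2}=\lambda(\lambda+1)/2$ and $\exp(1/\sigma^2)-1\leq 2/\sigma^2$ for $\sigma\geq 1$, the second-order contribution to either expansion is at most $q^2\lambda(\lambda+1)/((1-q)\sigma^2)$; combined with $\log(1+x)\leq x$ this already produces the leading term of the bound.

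The main obstacle, and the step I expect to be the most delicate, is bounding the higher-order tail $\sum_{k\geq 3}\binom{\lambda+1}{k}q^k\,\E[\,\cdot\,]$ by $O(q^3\lambda^3/\sigma^3)$. The difficulty is that these moments are not uniformly small: since $\mu_1(z)/\mu_0(z)=\exp((2z-1)/2\sigma^2)$, the ratio is large in the Gaussian tail, and a crude term-by-term estimate degrades like $\exp(k^2/2\sigma^2)$, which is unbounded once $k$ approaches $\lambda\sim\sigma^2\ln(1/(q\sigma))$. I would handle this by splitting each higher moment's integral into a bulk region where $|z|$ is moderate---there $\mu_1/\mu_0-1$ is $O(1/\sigma)$ up to lower-order corrections, so the $k\geq 3$ terms form a geometric-type series summing to $O(q^3\lambda^3/\sigma^3)$---and a tail region, where the standard Gaussian tail estimate together with $q<1/(16\sigma)$ and $\lambda\leq\sigma^2\ln(1/(q\sigma))$ forces $q^\lambda$-type factors that make the contribution negligible. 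These two numerical hypotheses are precisely what make the bulk/tail split go through, and verifying those two estimates is where the real work lies. Collecting the pieces gives $\alpha_\M(\lambda)=\log\max(E_1,E_2)\leq q^2\lambda(\lambda+1)/((1-q)\sigma^2)+O(q^3\lambda^3/\sigma^3)$, as claimed.
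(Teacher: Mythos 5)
Your proposal follows essentially the same route as the paper's proof: reduction to the one-dimensional mixture $\mu=(1-q)\mu_0+q\mu_1$, the change of measure to a finite binomial expansion of order $\lambda+1$ whose first-order term vanishes, the exact computation $\E_{z\sim\mu_0}[((\mu_0-\mu_1)/\mu_0)^2]=\exp(1/\sigma^2)-1$ together with $\mu\geq(1-q)\mu_0$ for the second-order term, and a region-splitting argument (the paper uses $z\leq 0$, $0\leq z\leq 1$, $z\geq 1$ with pointwise bounds on $|\mu_0-\mu_1|$) showing the $t\geq 3$ terms decay geometrically under the stated hypotheses on $q$, $\sigma$, and $\lambda$, so the sum is dominated by the $t=3$ term of order $O(q^3\lambda^3/\sigma^3)$. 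The plan is correct and matches the paper's argument in all essentials.
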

\begin{proof}
Fix $d'$ and let $d = d' \cup \{d_n\}$. Without loss
of generality, $f(d_n) = \e_1$ and $\sum_{i \in J \setminus [n]}f(d_i)
= \zero$. Thus $\M(d)$ and $\M(d')$ are distributed identically except for the first coordinate
and hence we have a one-dimensional problem. Let $\mu_0$ denote the
pdf of $\calN(0, \sigma^2)$ and let $\mu_1$ denote the pdf of
$\calN(1, \sigma^2)$. Thus: 
\begin{align*}
\M(d') &\sim \mu_0,\\
\M(d) &\sim \mu \eqdef{\ss} (1-q)\mu_0 + q\mu_1.
\end{align*}
We want to show that
\begin{align*}
\E_{z \sim \mu} [ (\mu(z) / \mu_0(z))^\lambda] &\leq \alpha,\\
\mbox{and  }{}\E_{z \sim \mu_0} [ (\mu_0(z) / \mu(z))^\lambda] &\leq \alpha,
\end{align*}
for some explicit $\alpha$ to be determined later.
 

We will use the same method to prove both bounds. Assume we have two distributions $\nu_0$ and $\nu_1$, and we wish to bound
\[\E_{z \sim \nu_0} [ (\nu_0(z) / \nu_1(z))^\lambda]=\E_{z \sim \nu_1} [ (\nu_0(z) / \nu_1(z))^{\lambda+1}]\,.\]

Using binomial expansion, we have
\begin{align}
{}&\E_{z \sim \nu_1} [ (\nu_0(z) / \nu_1(z))^{\lambda+1}]\nonumber\\
={}&\E_{z \sim \nu_1} [ (1 + (\nu_0(z)-\nu_1(z)) / \nu_1(z))^{\lambda+1}]\nonumber\\
={}&\E_{z \sim \nu_1} [ (1 + (\nu_0(z)-\nu_1(z)) / \nu_1(z))^{\lambda+1}]\nonumber\\
={}&\sum_{t=0}^{\lambda+1} {\lambda+1\choose t} \E_{z \sim \nu_1} [((\nu_0(z)-\nu_1(z)) / \nu_1(z))^t]\,.\label{eqn:binomial}
\end{align}

The first term in (\ref{eqn:binomial}) is $1$, and the second term is
\begin{align*}
\E_{z \sim \nu_1}\left[\frac{\nu_0(z) - \nu_1(z)}{\nu_1(z)}\right] &= \int_{-\infty}^\infty \nu_1(z)\frac{\nu_0(z) - \nu_1(z)}{\nu_1(z)} \ud z\\
&= \int_{-\infty}^\infty \nu_0(z) \ud z - \int_{-\infty}^\infty \nu_1(z) \ud z \\
&= 1 - 1 = 0.
\end{align*}

To prove the lemma it suffices to show show that for both $\nu_0 = \mu, \nu_1=\mu_0$ and $\nu_0=\mu_0,\nu_1=\mu$, the third term is bounded by $q^2\lambda(\lambda+1)/(1-q)\sigma^2$
and that this bound dominates the sum of the remaining terms. We will prove the more difficult second case
($\nu_0=\mu_0,\nu_1=\mu$); the proof of the other case is similar.

To upper bound the third term in (\ref{eqn:binomial}), we note that $\mu(z) \geq
(1-q)\mu_0(z)$, and write
\begin{align*}
\E_{z \sim \mu}&\left[\left(\frac{\mu_0(z) - \mu(z)}{\mu(z)}\right)^2\right]\\
 &= q^2\E_{z \sim \mu}\left[\left(\frac{\mu_0(z) - \mu_1(z)}{\mu(z)}\right)^2\right]\\
 &= q^{2}\int_{-\infty}^\infty \frac{(\mu_0(z) - \mu_1(z))^{2}}{\mu(z)} \ud z\\
 &\leq \frac{q^{2}}{1-q} \int_{-\infty}^\infty \frac{(\mu_0(z) - \mu_1(z))^{2}}{\mu_{0}(z)} \ud z\\
 &= \frac{q^{2}}{1-q} \E_{z \sim \mu_0}\left[\left(\frac{\mu_0(z) - \mu_1(z)}{\mu_0(z)}\right)^2\right].
\end{align*}
An easy fact is that for any $a \in \R$, $\E_{z \sim \mu_0} \exp(2az/2\sigma^2)
= \exp(a^2/2\sigma^2)$. Thus,
\begin{align*}
\E_{z \sim \mu_0}&\left[\left(\frac{\mu_0(z)
 - \mu_1(z)}{\mu_0(z)}\right)^2\right]\\ &= \E_{z \sim \mu_0} \left[\left(1
   - \exp(\frac{2z-1}{2\sigma^2})\right)^2\right]\\
&= 1 - 2\E_{z \sim \mu_0}\left[\exp(\frac{2z-1}{2\sigma^2})\right]\\
&\hphantom{=1\;}  + \E_{z \sim \mu_0}\left[\exp(\frac{4z-2}{2\sigma^2})\right]\\
&= 1 - 2\exp\left(\frac{1}{2\sigma^{2}}\right) \cdot \exp\left(\frac{-1}{2\sigma^{2}}\right)\\
&\hphantom{=1\;} + \exp\left(\frac{4}{2\sigma^{2}}\right) \cdot \exp\left(\frac{-2}{2\sigma^{2}}\right)\\
&= \exp(1/\sigma^2) - 1.\\
\end{align*}

Thus the third term in the binomial expansion (\ref{eqn:binomial})
\begin{align*}
{{1+\lambda} \choose 2} \E_{z \in \mu} \left[\left(\frac{\mu_0(z) - \mu(z)}{\mu(z)}\right)^2\right] &\leq  \frac{\lambda(\lambda+1) q^{2}}{(1-q)\sigma^{2}}.
\end{align*}
%

To bound the remaining terms, we first note that by standard calculus, we
get:
\begin{align*}
\forall z \leq 0: |\mu_0(z) - \mu_1(z)| &\leq -(z-1)\mu_0(z)/\sigma^2,\\
\forall z \geq 1: |\mu_0(z) - \mu_1(z)| &\leq z\mu_1(z)/\sigma^2,\\
\forall 0 \leq z \leq 1: |\mu_0(z) - \mu_1(z)|
&\leq \mu_0(z)(\exp(1/2\sigma^2) - 1) \\&\leq \mu_0(z)/\sigma^2.\\
\end{align*}

We can then write
\begin{align*}
\E_{z \sim \mu}&\left[\left(\frac{\mu_0(z) - \mu(z)}{\mu(z)}\right)^t\right]\\
&\leq \int_{-\infty}^0 \mu(z) \left|\left(\frac{\mu_0(z)
- \mu(z)}{\mu(z)}\right)^t\right| \ud z \\&\quad+ \int_{0}^1 \mu(z) \left|\left(\frac{\mu_0(z)
- \mu(z)}{\mu(z)}\right)^t\right| \ud z \\&\quad + \int_1^{\infty} \mu(z) \left|\left(\frac{\mu_0(z) - \mu(z)}{\mu(z)}\right)^t\right|\ud z.
\end{align*}
We consider these terms individually. We repeatedly make use of three
observations: (1) $\mu_0 - \mu = q(\mu_0 - \mu_1)$, (2) $\mu \geq
(1-q) \mu_0$, and (3) $\E_{\mu_{0}}[|z|^{t}] \leq \sigma^{t} (t-1)!!$. The first term can then be bounded by
\begin{align*}
\frac{q^t}{(1-q)^{t-1}\sigma^{2t}}&\int_{-\infty}^{0} \mu_0(z) |z-1|^t \ud z\\
&\leq \frac{(2q)^t (t-1)!! }{2(1-q)^{t-1}\sigma^t}.
\end{align*}
The second term is at most
\begin{align*}
\frac{q^t}{(1-q)^t} \int_{0}^1 \mu(z) &\left|\left(\frac{\mu_0(z)
- \mu_1(z)}{\mu_0(z)}\right)^t\right| \ud
  z \\&\leq \frac{q^t}{(1-q)^t}\int_0^{1} \mu(z) \frac{1}{\sigma^{2t}} \ud
  z\\
&\leq \frac{q^t}{(1-q)^t \sigma^{2t}}.
\end{align*}

 Similarly, the third term is at most
\begin{align*}
&\frac{q^t}{(1-q)^{t-1}\sigma^{2t}} \int_1^{\infty} \mu_{0}(z)
\left(\frac{z\mu_1(z)}{\mu_{0}(z)}\right)^t \ud z\\
&\;\;\;\;\leq \frac{q^t}{(1-q)^{t-1}\sigma^{2t}}\int_1^{\infty} \mu_{0}(z)
\exp((2tz - t)/2\sigma^2) z^t \ud z\\
&\;\;\;\;\leq \frac{q^t\exp((t^2-t)/2\sigma^2)}{(1-q)^{t-1}\sigma^{2t}}\int_{0}^\infty \mu_0(z-t)
z^t \ud z\\
&\;\;\;\;\leq \frac{(2q)^t \exp((t^2-t)/2\sigma^2) (\sigma^t(t-1)!! + t^t)}{2(1-q)^{t-1}\sigma^{2t}}.
\end{align*}
Under the assumptions on $q$, $\sigma$, and $\lambda$, it is easy to check that the three terms, and their sum, drop off geometrically fast in $t$ for $t>3$. Hence the binomial expansion (\ref{eqn:binomial}) is dominated by the $t=3$ term, which is $O(q^3\lambda^3/\sigma^3)$. The claim follows.
\end{proof}

To derive Theorem~\ref{thm:main}, we use the above moments bound along with the tail bound from Theorem~\ref{thm:property_supp}, optimizing over the choice of $\lambda$.
\setcounter{thm}{0}\begin{thm}
There exist constants $c_1$ and $c_2$ so that given the sampling probability $q=L/N$ and the number of steps $T$, for any $\eps < c_1 q^2T$,  Algorithm~\ref{alg:privsgd} is $(\eps,\delta)$-differentially private for any $\delta>0$ if we choose 
\begin{align*}
\sigma \geq c_2\frac{q \sqrt{T \log(1/\delta)}}{\eps}\,.
\end{align*}
\end{thm}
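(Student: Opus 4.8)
The plan is to instantiate Lemma~\ref{lem:sampled_gaussian_mgf} at each step, compose the resulting moment bounds via Theorem~\ref{thm:property_supp}, and then convert to $(\eps,\delta)$ by optimizing the tail bound over $\lambda$. Each step $t$ of Algorithm~\ref{alg:privsgd} maps the dataset to $\frac1L\big(\sum_{i\in L_t}\bar{\g}_t(x_i)+\calN(0,\sigma^2C^2\Id)\big)$, where $L_t$ is a sample with inclusion probability $q=L/N$ and each $\bar{\g}_t(x_i)$ has $\ell_2$ norm at most $C$. Dividing through by $C$ and by $L$ is an injective deterministic post-processing, so it leaves the privacy loss $c(\cdot;\M_t,\cdot)$, and hence $\alpha_{\M_t}(\lambda)$, unchanged; what remains is exactly the normalized sampled Gaussian mechanism $d\mapsto\sum_{i\in J}f(d_i)+\calN(0,\sigma^2\Id)$ with $\|f(\cdot)\|_2\le1$ from Lemma~\ref{lem:sampled_gaussian_mgf}. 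Thus for every positive integer $\lambda$ in that lemma's admissible range, $\alpha_{\M_t}(\lambda)\le \frac{q^2\lambda(\lambda+1)}{(1-q)\sigma^2}+O(q^3\lambda^3/\sigma^3)$. Since the $T$ steps form an adaptive composition (each $\btheta_t$, and hence $\M_t$, depends only on the public outputs of the earlier steps), the composability part of Theorem~\ref{thm:property_supp} gives $\alpha_\M(\lambda)\le \frac{Tq^2\lambda(\lambda+1)}{(1-q)\sigma^2}+O(Tq^3\lambda^3/\sigma^3)$.

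By the tail-bound part of Theorem~\ref{thm:property_supp} it then suffices to exhibit one positive integer $\lambda$ with $\alpha_\M(\lambda)-\lambda\eps\le\ln\delta$. I would split the budget and ask for (i) $\alpha_\M(\lambda)\le\lambda\eps/2$ and (ii) $\lambda\ge 2\ln(1/\delta)/\eps$, which together force $\alpha_\M(\lambda)-\lambda\eps\le-\lambda\eps/2\le\ln\delta$. Ignoring the lower-order term momentarily, (i) holds as soon as $\lambda+1\le\frac{(1-q)\sigma^2\eps}{2Tq^2}$. So it is enough that the interval $[\,2\ln(1/\delta)/\eps,\ \frac{(1-q)\sigma^2\eps}{2Tq^2}-1\,]$ contains an integer, and this happens once $\sigma^2\gtrsim \frac{Tq^2\ln(1/\delta)}{\eps^2}$, i.e. once $\sigma\ge c_2\,q\sqrt{T\log(1/\delta)}/\eps$ for a suitable absolute constant $c_2$ — which is precisely the theorem's hypothesis, and is what pins down $c_2$.

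The remaining work, and where I expect the genuine friction to be, is making the informal steps rigorous: one must (a) pick $\lambda$ to be an honest positive integer lying within the range $\lambda\le\sigma^2\ln\frac{1}{q\sigma}$ demanded by Lemma~\ref{lem:sampled_gaussian_mgf} and compatible with its hypotheses $\sigma\ge1$ and $q<\frac{1}{16\sigma}$; and (b) verify that the discarded $O(Tq^3\lambda^3/\sigma^3)$ term is in fact dominated by, say, $\lambda\eps/4$, so that (i) survives its reinstatement. Both (a) and (b) are exactly where the assumption $\eps<c_1q^2T$ — together with $q$ small — is spent: it excludes the degenerate regime in which $\eps$ is so large relative to $q^2T$ that the optimal $\lambda$ would fall below $1$ or outside the lemma's range, or in which the cubic error term would overtake the quadratic main term. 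Once $c_1$ and $c_2$ are fixed so that (a) and (b) hold, conditions (i)–(ii) deliver the claim. Apart from Lemma~\ref{lem:sampled_gaussian_mgf}, which I take as given, the only ingredients are the normalization argument and this one-parameter optimization.
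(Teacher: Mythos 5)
Your proposal is correct and follows essentially the same route as the paper's own proof: per-step moment bounds from Lemma~\ref{lem:sampled_gaussian_mgf}, linear composition via Theorem~\ref{thm:property_supp}, and the tail bound with the same two sufficient conditions $\alpha_\M(\lambda)\le\lambda\eps/2$ and $\exp(-\lambda\eps/2)\le\delta$ subject to $\lambda\le\sigma^2\log(1/q\sigma)$. You are in fact slightly more explicit than the paper (which compresses the final verification into ``it is easy to verify'') about the rescaling-as-post-processing step and about where $\eps<c_1q^2T$ is spent in controlling the integrality of $\lambda$ and the cubic error term.
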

\begin{proof}
Assume for now that $\sigma, \lambda$ satisfy the conditions in Lemma~\ref{lem:sampled_gaussian_mgf}.  By Theorem~\ref{thm:property}.\ref{thm:partone} and Lemma~\ref{lem:sampled_gaussian_mgf}, the log moment of Algorithm~\ref{alg:privsgd} can be bounded as follows $\alpha(\lambda) \leq T q^2 \lambda^2/\sigma^2$. By Theorem~\ref{thm:property}, to guarantee Algorithm~\ref{alg:privsgd} to be $(\eps,\delta)$-differentially private, it suffices that
\begin{align*}
T q^2 \lambda^2/\sigma^2 &\leq \lambda \eps /2\,,\\
\exp(-\lambda\eps/2) &\leq \delta\,.
\end{align*}
In addition, we need
\begin{align*}
\lambda \leq \sigma^2 \log(1/q\sigma)\,.
\end{align*}
It is now easy to verify that when $\eps=c_1 q^2T$, we can satisfy all these conditions by setting
\begin{align*}
\sigma = c_2 \frac{q \sqrt{T \log(1/\delta)}}{\eps}\,
\end{align*}
for some explicit constants $c_1$ and $c_2$.
\end{proof}

\mycomment{\begin{lem}
  \label{lem:sampled_gaussian_mgf}
Suppose that $f$ has $\ell_2$-sensitivity $1$, and $\tilde{d}$ is a sample from $d \in D^n$ where we select uniformly at random $qn$ rows from $d$ without replacement for $q \in (0, 1]$. Then the mechanism $\M(d) = f(\tilde{d}) + \calN(0,\sigma^2 \Id)$ satisfies that for any positive integer $\lambda$ where \ktnote{Check this bound. Put in the correct little o term.}
\begin{align*}
\alpha_\M(\lambda) &\leq \log(\sum_{i=0}^{\lambda} ())\\
&\leq CONSTANT * q^2\lambda(\lambda+1)/2\sigma^2 + o()
  \end{align*}
  \end{lem}
\begin{proof}
Fix $d$ and $d'$ and suppose that $d$ and $d'$ differ in the $i$th row. Abbreviating $Samp_{qn}(d)$ as $S(d)$, for any outcome $\outcome \in \Range$, we can write 
\begin{align*}
  \Pr&[\M(d) = \outcome]\\
  &= (1-q) \Pr[\M_0(S(d)) = \outcome \mid i \not\in S(d)] \\
    &\,\,+ q \Pr[\M_0(S(d)) = \outcome \mid i \in S(d)]\\
  &=  \Pr[\M_0(S(d)) = \outcome \mid i \not\in S(d)] \times  \\
        &\,\,\,\left(1 + q(\frac{\Pr[\M_0(S(d)) = \outcome \mid i \in S(d)]}{\Pr[\M_0(S(d)) = \outcome \mid i \not\in S(d)]} - 1)\right)\\
  \end{align*}
  Noting that $M_0(S(d) \mid i \not\in S(d)) \sim M_0(S(d') \mid i \not\in S(d'))$, we can write
  \begin{align*}
  \frac{\Pr[\M(d') = \outcome]}{\Pr[\M(d) = \outcome]}&\phantom{LONG PHANTOM TEXT STRING}
  \end{align*}\begin{align*}
  \phantom{gap}&= \frac{1 + q(\frac{\Pr[\M_0(S(d')) = \outcome \mid i \in S(d')]}{\Pr[\M_0(S(d')) = \outcome \mid i \not\in S(d')]} - 1)}{1 + q(\frac{\Pr[\M_0(S(d)) = \outcome \mid i \in S(d)]}{\Pr[\M_0(S(d)) = \outcome \mid i \not\in S(d)]} - 1)}\\
  &= 1 + \frac{q(\frac{\Pr[\M_0(S(d')) = \outcome \mid i \in S(d')]}{\Pr[\M_0(S(d')) = \outcome \mid i \not\in S(d')]} - \frac{\Pr[\M_0(S(d)) = \outcome \mid i \in S(d)]}{\Pr[\M_0(S(d)) = \outcome \mid i \not\in S(d)]})}{1 + q(\frac{\Pr[\M_0(S(d)) = \outcome \mid i \in S(d)]}{\Pr[\M_0(S(d)) = \outcome \mid i \not\in S(d)]} - 1)}.
  \end{align*}
\end{proof}
The denominator is at least BLAH. 

This differs from the generic privacy amplification theorem in two important ways. First, we sample without replacement exactly $qn$ elements, instead of sampling each element independently with probability $q$. Second, we directly prove a logmgf upper bound, allowing us to get a saving similar to that for the case of the Gaussian mechanism.

The following Theorem allows us to compose differentially private mechanisms run on parts of a random partition of a dataset, instead of on independent samples.

\begin{thm}
  Let $\M_i$ be such that running $\M_i$ on a random $qn$-sized sample from $d$ is $(\alpha_i, \lambda)$-logmgf bounded. Then sequentially running $\M_i(\tilde{d}^{(i)})$ for $i\in [k]$ is $(f(q,k)\cdot\sum_i \alpha_i, \lambda)$-logmgf bounded as long as each $\tilde{d}^{(i)}$ is a random $qn$-sized sample from $d$, but the $\tilde{d}^{(i)}$'s can be arbitrarily correlated.
  \end{thm}
Note that if the $\tilde{d}^{(i)}$'s are independent, the the theorem holds for $f(q,k)=k$ by the composition theorem. The important property here is that allowing these to be dependent does not noticeably hurt the bounds, as long $k \leq \frac{1}{q^2}$.\ktnote{Work this out, and put in the right values.} This allows us to take a permutation of the dataset and partition it in batches, which is the common approach in SGD.
\begin{proof}
  Proof idea. We use composition type argument. The prior on whether the differing element between $d$ and $d'$ is in the batch is initially $q$. By looking at the past, it can change a little, but because the past is DP, it can only change so much (say go from $q$ to $2q$ for some value of $2$ that slowly increases with $k$). \ktnote{To be filled in.}
  \end{proof}
}


\mycomment{\subsection*{Approximate moments bound}
We can also define an approximate version of this
concept. We say $\M$ is $(\alpha, \delta, \lambda)$-logmgf bounded if
for all pairs of neighboring databases $d,d' \in D^n$, there is a
random variable $Y$ such that 
 \begin{itemize}
   \item $\Delta(Y, \M(d)) \leq \delta$, where $\Delta$ denotes the statistical distance between distributions.
     \item $\E_{\outcome \sim Y}\left[\exp\left(\lambda  \log \frac{\Pr[Y = \outcome]}{\Pr[\M(d') = \outcome]}\right)\right] \leq \exp(\alpha)$
   \end{itemize}

The next two lemmas generalize the composition and the translation to
approximate differential privacy, to approximately logmgf bounded mechanisms.
\begin{lem}
\label{lem:approx_logmgf_comp}
Suppose that we run a sequence of adaptive mechanisms $\M_1, \ldots, \M_k$ where $\M_i$ is $(\alpha_i, \delta_i, \lambda)$-logmgf bounded. Then the composite mechanism $(\M_1, \ldots, \M_k)$ is $(\sum_i \alpha_i, \sum_i \delta_i, \lambda)$-logmgf bounded.
  \end{lem}

\begin{lem}
  Suppose that $\M$ is $(\alpha, \delta', \lambda)$-logmgf bounded. Then $\M$ is $(\eps, \delta+\delta')$-differentially private for
  \begin{align*}
    \delta = exp(\alpha - \lambda(\eps)).
    \end{align*}
  In particular, if $\lambda = 2\log \frac 1 \delta /\eps$ and $\alpha^{\delta'}(\lambda) \leq \log \frac 1 \delta$, then $\M$ is $(\eps, \delta + \delta')$-differentially private.
  \end{lem}
}
\section{From differential privacy to moments bounds}

One can also translate a differential privacy guarantee into a moment bound.
\newtheorem{thm_appendix}{Theorem}[section]
\newtheorem{lem_appendix}[thm_appendix]{Lemma}
\begin{lem_appendix}
  \label{lem:pure_dp_to_logmgf}
  Let $\M$ be $\eps$-differentially private. Then for any $\lambda > 0$, $\M$ satisfies
\begin{align*}
\alpha_{\lambda} \leq \lambda\eps(e^{\eps}-1) + \lambda^2\eps^2e^{2\eps}/2.
\end{align*}
  \end{lem_appendix}
\begin{proof}
  Let $Z$ denote the random variable $c(\M(d))$. Then differential privacy implies that
  \begin{itemize}
      \item $\mu \eqdef \E[Z] \leq \eps(\exp(\eps) - 1)$.  
      \item $|Z| \leq \eps$, so that $|Z-\mu| \leq \eps \exp(\eps)$.
    \end{itemize}
  Then $\E[\exp(\lambda Z)] = \exp(\lambda\mu) \cdot \E[\exp (\lambda(Z-\mu))]$. Since $Z$ is in a bounded range $[-\eps\exp(\eps), \eps\exp(\eps)]$ and $f(x) = \exp(\lambda x)$ is convex, we can bound $f(x)$ by a linear interpolation between the values at the two endpoints of the range. Basic calculus then implies that
  \begin{align*}
    \E[f(Z)] \leq f(\E[Z]) \cdot \exp(\lambda^2 \eps^2 \exp(2\eps)/2),
    \end{align*}
 which concludes the proof.
 \end{proof}
\mycomment{Similarly,
\begin{lem}
  \label{lem:approx_dp_to_logmgf}
  Let $\M$ be $(\eps,\delta)$-DP. Then $\M$ is $(\lambda\eps(e^{\eps}-1) + \lambda^2\eps^2e^{2\eps}/2, \delta, \lambda)$-logmgf bounded.
  \end{lem}
}
Lemma~\ref{lem:pure_dp_to_logmgf} and Theorem~\ref{thm:property_supp} give a way of getting a composition
theorem for differentially private mechanisms, which is roughly equivalent
to unrolling the proof of the strong composition theorem
of~\cite{DRV10-boosting}. 
The power of the \logmgfa accountant comes from the
fact that, for many mechanisms of choice, directly bounding in the
\logmgfa gives a stronger guarantee than one would get by establishing
differential privacy and applying Lemma~\ref{lem:pure_dp_to_logmgf}.
\mycomment{
or~\ref{lem:approx_dp_to_logmgf}. 
A case in point is the Gaussian
noise mechanism, for which the following bound follows from Corollary
5 in~\cite{Mironov16}. 
\begin{lem}
  \label{lem:gaussian_logmgf}
  If $f$ has $\ell_2$-sensitivity $1$, then the mechanism $\M(d) = f(d) + \calN(0,\sigma^2 \Id)$  is  $(\lambda(\lambda+1)/2\sigma^2, \lambda)$-logmgf bounded. \ktnote{Check this bound.}
  \end{lem}
}

\section{Hyperparameter Search}
Here we state Theorem 10.2 from~\cite{GuptaLMRT10} that we use to
account for the cost of hyperparameter search.
\begin{thm_appendix}[Gupta et al.~\cite{GuptaLMRT10}]
\label{thm:hyperparameters}
Let $M$ be an $\eps$ -differentially private mechanism
such that for a query function $q$ with sensitivity 1,
and a parameter $Q$, it holds that $\Pr_{r \sim M(d)}[q(d, r) \geq Q] \geq p$ for
some $p \in (0, 1)$. Then for any $\delta  > 0$ and any $\eps' \in
(0,\frac 1 2)$, there is a mechanism $M'$ which satisfies the following
properties:
\begin{itemize}
\item $\Pr_{r \sim M'(d)}\left[q(d, r) \geq Q - \frac{4}{\eps'}\log
(\frac{1}{\eps'\delta p})\right] \geq 1 - \delta$.
\item $M'$ makes $(\frac{1}{\eps'\delta p})^2 \log
(\frac{1}{\eps'\delta p})$ calls to $M$.
\item $M'$ is $(\eps + 8 \eps')$-differentially private.
\end{itemize}
\end{thm_appendix}

Suppose that we have a differentially private mechanism $M_i$ for each of
$K$ choices of hyperparameters. Let $\tilde{M}$ be the mechanism that picks a random choice of
hyperparameters, and runs the corresponding $M_i$. Let $q(d,r) $ denote the
number of examples from the validation set the $r$ labels correctly,
and let $Q$ be a target accuracy. Assuming that one of the hyperparameter
settings gets accuracy at least $Q$, $\tilde{M}$ satisfies the
pre-conditions of the theorem for $p = \frac 1 K$. Then with high
probability, the mechanism implied by the theorem gets accuracy close
to $Q$. We remark that the proof of Theorem~\ref{thm:hyperparameters}
actually implies a stronger $\max(\eps, 8\eps')$-differential privacy
for the setting of interest here.

Putting in some numbers, for a target accuracy of $95\%$ on a validation set of size 10{,}000, we get $Q=9,500$. Thus, if, for instance, we allow $\eps' = 0.5$, and $\delta = 0.05$, we lose at most $1\%$ in accuracy as long as $100 > 8 \ln \frac {40}{p}$. This is satisfied as long as $p \geq \frac{1}{6700}$. In other words, one can try 6{,}700 different parameter settings at privacy cost $\eps=4$ for the validation set. In our experiments, we tried no more than a hundred settings, so that this bound is easily satisfied. In practice, as our graphs show, $p$ for our hyperparameter search is significantly larger than $\frac{1}{K}$, so that a slightly smaller $\eps'$ should suffice.
}{}
\end{document}